\newtheorem{theorem}{Theorem}
\newtheorem{corollary}[theorem]{Corollary}
\newtheorem{proposition}[theorem]{Proposition}
\newtheorem{observation}[theorem]{Observation}
\newcommand{\de}{\partial}
\newcommand{\R}{\mathbb{R}}
\newcommand{\st}{\mathrm{s.t.}}
\newcommand{\tr}{^T}
\newcommand{\inv}{^{-1}}
\newcommand{\mc}{\mathcal}
\definecolor{myblue}{RGB}{0,114,89}
\definecolor{myorange}{RGB}{217,83,25}
\definecolor{myyellow}{RGB}{237,177,32}
\definecolor{mypurple}{RGB}{126,47,142}
\definecolor{mygreen}{RGB}{119,172,48}
\definecolor{mycyan}{RGB}{77,190,238}
\definecolor{myred}{RGB}{162,20,47}
\title{\LARGE \bf
Constraint-Driven Coordinated Control of Multi-Robot Systems
}
\author{Gennaro Notomista$^{1}$ and Magnus Egerstedt$^{2}$% <-this % stops a space
\thanks{*This work was sponsored by the U.S. Office of Naval Research through Grant No. N00014-15-2115.}% <-this % stops a space
\thanks{$^1$G. Notomista is with the School of Mechanical Engineering, Institute for Robotics and Intelligent Machines, Georgia Institute of Technology, Atlanta, GA, USA {\tt\small \href{mailto:g.notomista@gatech.edu}{g.notomista@gatech.edu}}}
\thanks{$^2$M. Egerstedt is with the School of Electrical and Computer Engineering, Institute for Robotics and Intelligent Machines, Georgia Institute of Technology, Atlanta, GA, USA {\tt\small \href{mailto:magnus@gatech.edu}{magnus@gatech.edu}}}
}
\begin{document}

\maketitle
\thispagestyle{empty}
\pagestyle{empty}

%%%%%%%%%%%%%%%%%%%%%%%%%%%%%%%%%%%%%%%%%%%%%%%%%%%%%%%%%%%%%%%%%%%%%%%%%%%%%%%%
\begin{abstract}
	
	In this paper we present a reformulation--framed as a constrained optimization problem--of multi-robot tasks which are encoded through a cost function that is to be minimized. The advantages of this approach are multiple. The constraint-based formulation provides a natural way of enabling long-term robot autonomy applications, where resilience and adaptability to changing environmental conditions are essential. Moreover, under certain assumptions on the cost function, the resulting controller is guaranteed to be decentralized. Furthermore, finite-time convergence can be achieved, while using local information only, and therefore preserving the decentralized nature of the algorithm. The developed control framework has been tested on a team of ground mobile robots implementing long-term environmental monitoring.
	
\end{abstract}

%%%%%%%%%%%%%%%%%%%%%%%%%%%%%%%%%%%%%%%%%%%%%%%%%%%%%%%%%%%%%%%%%%%%%%%%%%%%%%%%
\section{INTRODUCTION}
\label{sec:intro}

Robotic swarms are gradually leaving academic laboratories, e.\,g. \cite{pickem2017robotarium,rubenstein2012kilobot}, in favor of industrial settings, as in the case of \cite{d2012guest}, to reach even less structured and more dynamic environments, like agricultural lands and construction sites \cite{parker2008multiple}. This presents new challenges that robots have to face, which come either from unexpected and/or unmodeled phenomena or from changing environmental conditions. These issues become even more pronounced when the robots are deployed on the field for long-term applications, like persistent environment surveiling \cite{stump2011multi} or plant growth monitoring \cite{english2014vision}. Therefore, a way of encoding \textit{survivability} \cite{egerstedt2018robot}, i.\,e., the ability to remain alive (in a robotic sense), is needed now more than ever. In this paper, we introduce a method which can be used to ensure survivability, and which makes use of optimization tools that allow to only minimally influence the task which the robots are asked to perform.

Several solutions have been proposed in order to make robots robust to unknown or changing environmental conditions and to ensure their applicability to unstructured or even hazardous environments \cite{arkin1990techniques}. Moreover, in order to let the robots \textit{survive} for as much time as possible, some of the proposed methods entail scheduled periodic maintenance \cite{mishra2016battery}, or path optimization with the aim of maximizing the time spent in the field/minimizing the consumed energy \cite{martin2014long}. Some other methods employ a power-dependent multi-objective optimization to ensure that the robots execute their task, while maintaining a desired energy reserve, as done in \cite{derenick2011energy}. In both cases, a careful parameters tuning is required in order to prevent situations in which the robots trade survivability for shorter-term rewards.

As a matter of fact, \textit{goal-oriented} control strategies may not be ideal for long-term applications, where robustness to changing environmental conditions is required. Indeed, control policies obtained using optimal control strategies are characterized by a fragility related to the precise model assumptions \cite{csete2002reverse}. These are likely to be violated during the long time horizons over which the robots are deployed in the field. For this reason, in this paper, we consider a \textit{constraint-oriented} approach for multi-robot systems, where the survivability of the swarm is enforced as a constraint on the robots' task, encoded by the nominal input to the robots, $u_{nom}$. The control input $u$ is, then, synthesized, at each point in time, by solving the optimization problem
\begin{align}\label{eq:unom}
\min_u&~\|u-u_{nom}\|^2\\
\mathrm{s.t.}&~c_{surv}(x,u)\ge0,
\end{align}
where $c_{surv}(\cdot,\cdot)$ is the survivability constraint, which is also a function of the robots' state $x$ \cite{egerstedt2018robot}.

It is informative to note that ecological studies have shown that the constraints imposed by an environment strongly determine the behaviors developed by animals living in it \cite{Ricklefs}. Inspired by this concept, we ask whether robotic swarms can be controlled using constraints only. What this entails is that robots are programmed to do nothing, subject to task and survivability constraints. This is formalized by the following optimization problem
\begin{align}\label{eq:nounom}
\min_u&~\|u\|^2\\
\mathrm{s.t.}&~c_{surv}(x,u)\ge0\\
&~c_{task}(x,u)\ge0,
\end{align}
where $c_{task}(\cdot,\cdot)$ encodes the task constraint, which is equivalent to executing the nominal input $u_{nom}$ in \eqref{eq:unom}. Thus, what we could call a \textit{robot-ecological} formulation \cite{egerstedt2018robot} naturally lends itself to be implemented using optimization-based control techniques.

In this paper, we first give sufficient conditions for turning certain classes of multi-robot tasks into constraints within an optimization problem. Then, we present a systematic way of doing this. And, finally, we propose an effective task prioritization technique obtained by combining hard and soft constraints, such as survivability and task execution in \eqref{eq:nounom}.

The remainder of the paper is organized as follows. In Section~\ref{sec:constraintbasedcontrol}, we briefly recall the control techniques that will be used to synthesize the constraint-driven coordinated control policies for multi-robot systems. Then, we introduce an optimization program for executing minimum-energy gradient flow, which will be used, in Section~\ref{sec:multirobot}, to formalize the constraint-based control of multi-robot systems. Moreover, we show how to achieve \textit{decentralized finite-time} minimization algorithms using the presented approach. Two applications are presented in Section~\ref{sec:applications}, namely formation control and coverage control. Section~\ref{sec:experiments} reports the results of experiments executed with a team of ground mobile robots implementing the proposed controller to execute a long-term environmental monitoring task.

\section{CONSTRAINT-BASED CONTROL DESIGN}
\label{sec:constraintbasedcontrol}

In this section, we review the concepts related to control Lyapunov functions and control barrier functions which will then be used to formulate optimization problems whose solution corresponds to the execution of decentralized coordinated controllers for multi-robot systems.

\subsection{Control Lyapunov and Control Barrier Functions}
\label{subsec:clfcbf}

In order to design controllers that allow the execution of multi-robot tasks, we make use of control Lyapunov functions and, in particular, we resort to methods from finite-time stability theory of dynamical systems.

Consider the dynamical system in control affine form
\begin{equation}\label{eq:controlaffine}
	\dot x = f(x) + g(x) u,
\end{equation}
with $x\in \R^n$, $u\in U\subseteq\R^m$, and $f$ and $g$ locally Lipschitz continuous vector fields. One of the results we will use is given by the following theorem.

\begin{theorem}[Based on Theorem~4.3 in \cite{bhat2000finite}]
	\label{thm:finitetimecbf}
	Given a dynamical system \eqref{eq:controlaffine} and a continuous, positive definite function $V:\R^n\to\R$, a continuous controller $u$ such that
	\[
	\inf_{u\in U} \left\{L_f V(x) + L_g V(x) u + c (V(x))^\gamma\right\}\le0\quad\forall x\in\R^n,
	\]
	where $L_fV(x)$ and $L_gV(x)$ denote the Lie derivatives of $V$ in the directions of $f$ and $g$, respectively, $c>0$ and $\gamma\in(0,1)$, renders the origin $x=0$ finite-time stable.
	
	Moreover, an upper bound for the settling time $T$ is given by
	\[
	T \le \frac{1}{c(1-\gamma)} \left(V(x_0)\right)^{1-\gamma},
	\]
	where $x_0$ is the value of $x(t)$ at time $t=0$.
\end{theorem}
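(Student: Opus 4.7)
The plan is to reduce the finite-time stability claim to a scalar comparison argument on $V$ along closed-loop trajectories. First, the hypothesis says that for every $x$ there is a feedback value $u(x)\in U$ (attaining, or arbitrarily close to, the infimum) for which $L_f V(x) + L_g V(x) u(x) + c (V(x))^\gamma \le 0$; by assumption this $u$ can be chosen continuously in $x$. Differentiating $V$ along a solution $x(t)$ of the closed-loop system $\dot x = f(x) + g(x) u(x)$ then yields the scalar differential inequality $\dot V(x(t)) \le -c (V(x(t)))^\gamma$, which makes no reference to the state beyond $V$ itself.

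Next, I would invoke a comparison principle against the scalar ODE $\dot y = -c y^\gamma$ with $y(0) = V(x_0)$. Separating variables while $y>0$ gives $y(t)^{1-\gamma} = (V(x_0))^{1-\gamma} - c(1-\gamma) t$, so $y$ reaches $0$ exactly at $T = \frac{1}{c(1-\gamma)}(V(x_0))^{1-\gamma}$, and $y\equiv 0$ thereafter is a valid continuation. The comparison then gives $0 \le V(x(t)) \le y(t)$ for $t\in[0,T]$ and $V(x(t))=0$ for $t\ge T$. Continuity and positive definiteness of $V$ force $x(t)=0$ for all $t\ge T$, which delivers simultaneously the finite-time convergence and the claimed settling-time bound.

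To upgrade convergence to finite-time \emph{stability} in the sense of Bhat--Bernstein, I would additionally observe Lyapunov stability of the origin: since $\dot V \le 0$ the sublevel sets $\{V\le r\}$ are forward invariant, and positive definiteness of $V$ supplies the standard $\varepsilon$-$\delta$ estimate. The one genuinely delicate point, and the main obstacle, is the comparison step itself: because $-c y^\gamma$ is not locally Lipschitz at $y=0$, uniqueness for the comparison ODE fails there, and one cannot apply the textbook comparison lemma naively. I would handle this either by comparing against the unique maximal solution of $\dot y = -c y^\gamma$ (for which a comparison principle requires only continuity and monotonicity of the right-hand side) or by a perturbation: replace the initial condition with $V(x_0)+\epsilon$, apply the classical Lipschitz comparison on $\{y\ge \epsilon/2\}$, and pass $\epsilon\to 0^+$. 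Either route reproduces the bound, and the overall argument then follows Theorem~4.3 of \cite{bhat2000finite} essentially verbatim, the only adaptation being that the closed-loop vector field $f(x)+g(x)u(x)$ plays the role of the general vector field considered there.
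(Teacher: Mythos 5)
Your argument is correct and is essentially the one the paper intends: the paper's proof is simply a pointer to Theorem~4.3 of \cite{bhat2000finite}, and your reduction to the scalar differential inequality $\dot V \le -c V^\gamma$ followed by comparison with $\dot y = -c y^\gamma$ (with the non-Lipschitz point at $y=0$ handled by monotonicity of $V$ or a vanishing perturbation) is exactly that argument, yielding the settling-time bound $T \le \frac{1}{c(1-\gamma)}(V(x_0))^{1-\gamma}$. You have in effect supplied the details the paper omits, including the Lyapunov-stability step needed to upgrade finite-time convergence to finite-time stability.
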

\begin{proof}
	Similar to Theorem~4.3 in \cite{bhat2000finite}.
\end{proof}

To enforce constraints, such as survivability or task execution for multi-robot systems, we employ control barrier functions. These, as will be shown, are suitable for synthesizing constraints that can be encoded in terms of set-membership. Conceptually similar to control Lyapunov functions, control barrier functions have been introduced in \cite{ames2014control} with the objective of ensuring safety in a provably correct way. In this context, ensuring safety means ensuring the forward invariance of a set $S\subset \R^n$, in which we want to confine the state $x(t),~\forall t\ge0$. Without loss of generality, here we consider forward complete systems, for which $x(t)$, solution to \eqref{eq:controlaffine}, exists for all $t\ge0$.

Suppose we can find a continuously differentiable function $h:\R^n\to\R$, such that the \textit{safe set} $S$ can be defined as the zero-superlevel set of $h$, i.\,e.,
\begin{equation}\label{eq:safeset}
S = \{ x\in \R^n~|~h(x)\ge0 \},
\end{equation}
and
\begin{align}
\partial S &= \{ x\in \R^n~|~h(x)=0 \}\\
S^\circ &= \{ x\in \R^n~|~h(x)>0 \},
\end{align}
where $\partial S$ and $S^\circ$ denote the boundary and the interior of $S$, respectively.
If the following condition is satisfied
\begin{equation}\label{eq:zcbf}
\sup_{u\in U} \left\{L_f h(x) + L_g h(x) u + \alpha(h(x))\right\} \ge 0\quad\forall x\in \R^n,
\end{equation}
with $\alpha$ an extended class $\mc K$ function \cite{kellett2014compendium}, then $h$ is called a \textit{(zeroing) control barrier function}. Conditions to ensure forward invariance of the set $S$ are given in the following theorem.

\begin{theorem}[Safe set forward invariance \cite{xu2015robustness}]\label{thm:cbf}
Given a dynamical system \eqref{eq:controlaffine} and a set $S\subset\R^n$ defined by a continuously differentiable function $h$ as in \eqref{eq:safeset}, any Lipschitz continuous controller $u$ such that \eqref{eq:zcbf} holds renders the set $S$ forward invariant.
\end{theorem}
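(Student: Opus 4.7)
The plan is to combine a standard comparison argument with the fact that $\alpha$ is an extended class $\mathcal{K}$ function, so that $z=0$ is an equilibrium of a scalar comparison ODE that acts as a barrier for $h$ along trajectories.

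First I would observe that, since $u$ is Lipschitz continuous and $f,g$ are locally Lipschitz, the closed-loop vector field $f(x)+g(x)u(x)$ is locally Lipschitz, so for any initial condition $x_0\in S$ the Cauchy problem associated with \eqref{eq:controlaffine} admits a unique solution $x(t)$ on some interval containing $t=0$. Differentiating $h$ along such a trajectory yields $\dot h(x(t)) = L_f h(x(t)) + L_g h(x(t)) u(x(t))$, which, invoking \eqref{eq:zcbf}, is lower bounded by $-\alpha(h(x(t)))$.

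Next I would apply the comparison lemma to this differential inequality to obtain $h(x(t))\ge z(t)$, where $z$ solves the scalar autonomous ODE $\dot z = -\alpha(z)$ with $z(0)=h(x_0)\ge 0$. Because $\alpha$ is extended class $\mathcal{K}$, $\alpha(0)=0$ and $\alpha(z)>0$ for $z>0$, so $z=0$ is an equilibrium of this comparison dynamics, and trajectories starting at $z(0)\ge 0$ remain non-negative for all $t\ge 0$. This gives $h(x(t))\ge 0$, i.e., $x(t)\in S$, for all $t\ge 0$ in the interval of existence; forward completeness, assumed earlier in the excerpt, then promotes this to all $t\ge 0$, which is the definition of forward invariance of $S$.

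The main technical obstacle I anticipate is justifying the comparison lemma when $\alpha$ is only continuous and not necessarily Lipschitz at the origin, since uniqueness of the comparison ODE near $z=0$ may fail and the standard statement of the comparison lemma typically presumes Lipschitzness. I would handle this either by approximating $\alpha$ from below with a Lipschitz $\alpha_\epsilon\le\alpha$ satisfying $\alpha_\epsilon(0)=0$, applying the classical comparison lemma, and passing to the limit $\epsilon\to 0$, or, more directly, by invoking a Nagumo-type tangential argument: at any boundary point $x\in\partial S$ one has $h(x)=0$ and hence $\alpha(h(x))=0$, so \eqref{eq:zcbf} reduces to $L_f h(x) + L_g h(x) u(x) \ge 0$, which is precisely the condition that the closed-loop vector field points into the tangent cone of $S$ and therefore forbids trajectories from exiting $S$.
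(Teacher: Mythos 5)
The paper itself offers no proof here---it defers entirely to \cite{xu2015robustness}---and your outline follows the same standard route used in that reference: the closed-loop field is locally Lipschitz, so trajectories are unique; differentiating $h$ along them and invoking \eqref{eq:zcbf} gives the scalar inequality $\dot h \ge -\alpha(h)$; and one concludes $h\ge0$. You are also right that the only real technical obstacle is the possible non-Lipschitzness of $\alpha$ at the origin. However, your first proposed repair goes in the wrong direction: to pass from $\dot h \ge -\alpha(h)$ to $\dot h \ge -\alpha_\epsilon(h)$, which is what the comparison with $\dot z=-\alpha_\epsilon(z)$ requires, you need a Lipschitz \emph{majorant} $\alpha_\epsilon\ge\alpha$ with $\alpha_\epsilon(0)=0$, not a minorant $\alpha_\epsilon\le\alpha$; a minorant only weakens the lower bound on $\dot h$ and yields nothing. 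Worse, a Lipschitz majorant vanishing at the origin need not exist when $\alpha$ has infinite slope there---e.g.\ $\alpha(r)=\sqrt[3]{r}$, which is exactly the function this paper adopts in its experiments---so that patch cannot be fixed by reversing the inequality. Your second repair (Nagumo) is essentially what the cited literature does, but note that $L_fh(x)+L_gh(x)u\ge0$ on $\partial S$ characterizes membership in the tangent cone of $S$ only where $\nabla h(x)\neq0$; at a critical point of $h$ on the boundary the half-space $\{v:\nabla h(x)\cdot v\ge0\}$ may strictly contain the tangent cone, so this route silently assumes $0$ is a regular value of $h$.

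The cleanest closure uses neither tool and exploits the fact that \eqref{eq:zcbf} is imposed for all $x\in\R^n$, not merely on $\partial S$. Suppose $h(x(t_1))<0$ for some $t_1>0$ and let $t_0=\sup\{t<t_1 : h(x(t))\ge0\}$; by continuity $h(x(t_0))=0$ and $h(x(s))<0$ for $s\in(t_0,t_1]$. On that interval $\alpha(h(x(s)))<0$ because $\alpha$ is extended class $\mathcal{K}$, hence $\dot h(s)\ge-\alpha(h(x(s)))>0$, and integrating from $t_0$ to $t_1$ gives $h(x(t_1))\ge h(x(t_0))=0$, a contradiction. This argument needs only continuity and strict monotonicity of $\alpha$ with $\alpha(0)=0$, avoids both the comparison lemma and any regularity assumption on $\partial S$, and is the same mechanism that drives the paper's own proof of Theorem~\ref{thm:asymptstab}.
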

\begin{proof}
	See \cite{xu2015robustness}.
\end{proof}

The existence of the control barrier function $h$ also ensures the asymptotic stability of the set $S$, as shown in the following theorem.

\begin{theorem}[Safe set asymptotic stability]
	\label{thm:asymptstab}
	Under the same hypotheses as in Theorem~\ref{thm:cbf}, any Lipschitz continuous controller $u$ such that \eqref{eq:zcbf} holds renders the set $S$ asymptotically stable, that means $x(t)\vert_{t=0}\notin S \Rightarrow x(t)\rightarrow\in S$ as $t\to\infty$.
\end{theorem}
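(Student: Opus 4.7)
The plan is to convert the CBF inequality into a scalar differential inequality along trajectories and then use a comparison argument to show that $h$ approaches zero from below.

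\textbf{Step 1: Reduce to a scalar inequality.} Let $\eta(t) := h(x(t))$ along a trajectory of \eqref{eq:controlaffine} driven by any Lipschitz continuous controller $u$ satisfying \eqref{eq:zcbf}. By the chain rule, $\dot\eta(t) = L_f h(x(t)) + L_g h(x(t)) u(t)$, so \eqref{eq:zcbf} yields the scalar differential inequality
\[
\dot\eta(t) \ge -\alpha(\eta(t)) \qquad \forall t \ge 0.
\]

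\textbf{Step 2: Split into two cases.} Suppose $x(0) \notin S$, i.\,e. $\eta(0) < 0$. If there exists a finite $t^\star$ with $\eta(t^\star) \ge 0$, then $x(t^\star) \in S$ and Theorem~\ref{thm:cbf} guarantees $x(t) \in S$ for all $t \ge t^\star$, and the conclusion holds trivially. Otherwise $\eta(t) < 0$ for all $t \ge 0$, and I would introduce the comparison ODE
\[
\dot\eta^\star(t) = -\alpha(\eta^\star(t)), \qquad \eta^\star(0) = \eta(0).
\]
A standard comparison lemma (applicable since $\alpha$ is continuous and monotone, which suffices for lower-bounding by the minimal solution) gives $\eta(t) \ge \eta^\star(t)$ on their common domain.

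\textbf{Step 3: Show $\eta^\star(t) \to 0$.} Because $\alpha$ is extended class $\mc K$, we have $-\alpha(r) > 0$ for all $r < 0$, so $\eta^\star$ is strictly increasing and bounded above by $0$ (the unique equilibrium of the comparison ODE). Hence $\eta^\star(t) \to L$ for some $L \le 0$. If $L < 0$, then by continuity of $\alpha$, $\dot\eta^\star(t) \to -\alpha(L) > 0$, contradicting convergence of $\eta^\star$. Therefore $L = 0$, which combined with $0 \ge \eta(t) \ge \eta^\star(t)$ yields $\eta(t) = h(x(t)) \to 0$, i.\,e. $x(t) \to S$ as $t \to \infty$.

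\textbf{Main obstacle.} The delicate point is the comparison step: $\alpha$ is only continuous (and strictly increasing), not necessarily Lipschitz, so uniqueness for the comparison ODE may fail and one has to invoke the version of the comparison lemma phrased via minimal/maximal Carath\'eodory solutions. A minor secondary point is translating $h(x(t)) \to 0$ into set-distance convergence $\mathrm{dist}(x(t),S) \to 0$; this follows from continuity of $h$ together with boundedness of the trajectory on the complement of $S$, which is immediate since $\eta$ is monotonically nondecreasing along the portion of the trajectory outside $S$ and the sublevel sets of $-h$ restricted to $\R^n\setminus S$ are preserved.
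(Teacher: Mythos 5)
Your proof is correct in its essentials but takes a genuinely different route from the paper's. You work directly with the scalar quantity $\eta(t)=h(x(t))$, derive the differential inequality $\dot\eta\ge-\alpha(\eta)$, and squeeze $\eta$ between $0$ and the minimal solution of the comparison ODE $\dot\eta^\star=-\alpha(\eta^\star)$, whose convergence to $0$ you establish by an elementary monotone-limit argument. This is self-contained, it correctly flags the one delicate analytic point (non-uniqueness of the comparison ODE when $\alpha$ is merely continuous, handled via minimal Carath\'eodory solutions), and it yields monotone convergence of $h$ to zero, with an explicit rate whenever $\alpha$ is specified. The paper instead defines the Lyapunov-like function $V$ equal to $-h$ off $S$ and $0$ on $S$, observes $\dot V<0$ off $S$, and invokes a converse-Lyapunov set-stability theorem (Theorem~2.8 of Lin et al.) to obtain a class $\mathcal{KL}$ bound $d(x(t),S)\le\beta(d(x_0,S),t)$. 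That route buys a formally stronger conclusion (uniform asymptotic stability of $S$, not mere attractivity), but it leans on hypotheses the paper does not verify: $V$ is not continuously differentiable across $\partial S$ wherever $\nabla h\neq0$, and the cited theorem requires $V$ to be proper with respect to $d(\cdot,S)$. Your argument avoids both issues.

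The one real weak point in your write-up is the final sentence. The implication $h(x(t))\to0\Rightarrow d(x(t),S)\to0$ requires an additional properness assumption on $h$, and your justification of trajectory boundedness does not hold as stated: monotonicity of $\eta$ confines the trajectory to the superlevel set $\{x:h(x)\ge h(x_0)\}$, but such a set need not be bounded. For instance, with $h(x_1,x_2)=-x_2^2/(1+x_1^2)$ one has $S=\{x_2=0\}$, yet $h\to0$ along the ray $x_2=1$, $x_1\to\infty$, while the distance to $S$ stays equal to $1$. The paper's own proof implicitly needs the same kind of properness to apply its set-stability theorem, so you are no worse off; still, if the intended conclusion is genuinely $d(x(t),S)\to0$ rather than $h(x(t))\to0$, you should either add a hypothesis (e.g., that $\{x:h(x)\ge-c\}$ is compact for some $c>0$) or weaken the stated conclusion to convergence of $h$.
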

\begin{proof}[Proof (based on \cite{xu2015robustness})]
	Let
	\begin{equation}
	V(x)=\begin{cases}
	-h(x)&x\in\R^n\setminus S\\
	0&x\in S
	\end{cases}
	\end{equation}
	be a control Lyapunov candidate function. Thus, $V(x)>0$ for $x\in\R^n\setminus S$ and $V(x)=0$ for $x\in S$. Moreover,
	\begin{align}
	\dot V &= \frac{\de V}{\de x}\dot x = L_f V(x) + L_g V(x) u\\
	&=\begin{cases}
		-L_f h(x) -L_g h(x) u&x\in\R^n\setminus S\\
		0&x\in S.
	\end{cases}
	\end{align}
	Furthermore, since $h$ is continuously differentiable, $V$ is continuously differentiable as well. Then, by hypothesis \eqref{eq:zcbf}, $\dot V = -L_f h(x) -L_g h(x) u \le \alpha(h(x)) < 0$ for $x\in\R^n\setminus S$ and $\dot V = 0$ for $x\in S$. By Theorem~\ref{thm:cbf}, $S$ is forward invariant. Moreover, $S$ is closed, since it is the inverse image of the closed set $[0,\infty)\subseteq\R$ under the continuous map $h$. Therefore, by Theorem~2.8 in \cite{lin1996smooth}, the system \eqref{eq:controlaffine} is uniformly globally asymptotically stable with respect to the set $S$. Thus, there exists a class $\mc{KL}$ function $\beta$ \cite{kellett2014compendium} such that, given any initial state $x_0$, the solution $x(t)$ satisfies $d(x(t),S)\le\beta(d(x_0,S),t),~\forall t\ge0$, where $d(y,S)\triangleq\inf_{z\in S}\|y-z\|$. Hence, as $t\to\infty$, $x(t)\rightarrow\in S$.
\end{proof}

\subsection{Minimum-Energy Gradient Flow}
\label{subsec:gradientconstraint}

In this section we consider the problem of minimizing a cost function $J$. We present a method to reformulate the classic gradient flow algorithm using the tools introduced in Section~\ref{subsec:clfcbf}. This allows us to synthesize a constrained optimization program that is equivalent to the minimization of the cost $J$.

Consider the single integrator dynamical system $\dot x = u$, where $x,u\in\R^n$ are the state and the control input, respectively. Assume that the objective consists in minimizing a cost $J(x)$, where $J:\R^n\to\R_+$ is a continuously differentiable function. Applying gradient flow algorithms, the problem
\begin{equation}
\min_u~J(x)
%\begin{aligned}
%\min_u~&J(x)\\
%\st~&\begin{cases}
%\dot x = u\\
%\left.x(t)\right\vert_{t=0} = x_0,
%\end{cases}
%\end{aligned}
\label{eq:minuJ}
\end{equation}
can be directly minimized by choosing
\begin{equation}
u = -\frac{\de J}{\de x}\tr(x).
\label{eq:optimalucost}
\end{equation}
In fact, with this choice of input, applying chain rule leads to:
\begin{equation}\label{eq:negativejdot}
\dot J(x) = \frac{dJ}{dt} = \frac{\de J}{\de x} \dot x = \frac{\de J}{\de x} u = -\left\|\frac{\de J}{\de x}\right\|^2 \le 0.
\end{equation}

We now show that the minimization problem~\eqref{eq:minuJ} can be formulated as a minimum-energy problem that achieves the same objective of minimizing the cost $J$.

To this end, let us define the barrier function
\begin{equation}\label{eq:hj}
h(x) = -J(x)
\end{equation}
and its zero-superlevel set, i.\,e. the \textit{safe} set,
\begin{equation}
S = \left\{x~|~h(x)\ge0\right\}= \left\{x~|~J(x)\le0\right\}= \left\{x~|~J(x)=0\right\}.
\end{equation}
By Theorems~\ref{thm:cbf}~and~\ref{thm:asymptstab}, the differential constraint
\begin{equation}
\dot h(x) = \frac{\de h}{\de x} u \ge -\alpha(h(x)),
\end{equation}
where $\alpha$ is an extended class $\mc K$ function, ensures that the set $S$ is forward invariant when $h(x(0))\ge0$ and asymptotically stable when $h(x(0))\le0$, $x(0)$ being the value of the state $x$ at time $t=0$.

\begin{observation}\label{obs:lyapbarr}
Theorem~\ref{thm:asymptstab} shows the existence of the control Lyapunov function
\begin{equation}\label{eq:lyapbarrier}
\begin{aligned}
V(x) &= \begin{cases}
-h(x) & \mathrm{if}~x\notin S\\
0 & \mathrm{if}~x\in S
\end{cases}\\
&= \begin{cases}
J(x) & \mathrm{if}~x\notin S\\
0 & \mathrm{if}~x\in S
\end{cases}\\
&\equiv J(x).
\end{aligned}
\end{equation}
Indeed, from \eqref{eq:negativejdot}, since $J(x)\ge0,~\forall x\in\R^n$, one can see that $J(x)$ is a control Lyapunov function. In fact, if $x$ belongs to $X\subset\R^n$ compact, LaSalle's Invariance Principle ensures that the state will converge to a stationary point of $J(x)$, namely, $x\rightarrow x^\ast$, with $\frac{\de J}{\de x}(x^\ast) = 0$.
\end{observation}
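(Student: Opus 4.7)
The plan is to verify two claims implicit in the observation: first, that the piecewise-defined $V$ produced by Theorem~\ref{thm:asymptstab} coincides with $J$ everywhere, and second, that $J$ itself serves as a control Lyapunov function whose decay under the gradient-flow controller forces convergence to stationary points via LaSalle's Invariance Principle.

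For the identity $V \equiv J$, I would simply unwind the safe set explicitly. Since $h = -J$ and $J:\R^n \to \R_+$, the set $S = \{x \mid h(x) \ge 0\}$ collapses to $\{x \mid J(x) = 0\}$, as already noted in the excerpt. On $S$ one therefore has $V(x) = 0 = J(x)$, while on $\R^n \setminus S$ one has $V(x) = -h(x) = J(x)$. The two expressions agree pointwise, which justifies the final line $V \equiv J$.

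For the Lyapunov property, $J$ is continuously differentiable and nonnegative by hypothesis, so it is a candidate Lyapunov function on its own right. Substituting the gradient-flow input $u = -\frac{\de J}{\de x}\tr(x)$ from \eqref{eq:optimalucost} into the chain-rule identity \eqref{eq:negativejdot} gives $\dot J(x) = -\left\|\frac{\de J}{\de x}\right\|^2 \le 0$, so $J$ is monotonically nonincreasing along closed-loop trajectories and thus qualifies as a CLF.

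To close the observation with the LaSalle step, I would restrict attention to a compact forward-invariant set $X$, naturally any bounded sublevel set $\{x \mid J(x) \le J(x_0)\}$, whose forward invariance is immediate from $\dot J \le 0$. LaSalle's Invariance Principle then delivers convergence of $x(t)$ to the largest invariant subset of $\{x \in X \mid \dot J(x) = 0\} = \{x \mid \frac{\de J}{\de x}(x) = 0\}$, which is precisely the set of stationary points $x^\ast$ of $J$. The only nontrivial ingredient is the boundedness of the operative sublevel set of $J$, which is the standard requirement behind any gradient-flow convergence statement; this is where I would expect the main care to be needed in making the observation fully rigorous.
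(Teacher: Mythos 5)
Your argument is correct and follows the same route as the paper: the identity $V\equiv J$ comes from unwinding $S=\{x\mid J(x)=0\}$ (using $h=-J$ and $J\ge0$), the CLF property comes from $\dot J=-\|\frac{\de J}{\de x}\|^2\le0$ in \eqref{eq:negativejdot}, and convergence to a stationary point follows from LaSalle's Invariance Principle on a compact set. Your added remark that the operative compact set can be taken to be a bounded sublevel set of $J$ (forward invariant since $\dot J\le0$) makes explicit the same compactness hypothesis the paper states as ``$x$ belongs to $X\subset\R^n$ compact.''
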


We can now introduce the following optimization problem:
\begin{equation}
\begin{aligned}
\min_{u,\delta}~&\|u\|^2+|\delta|^2\\
\st~&\frac{\de h}{\de x} u \ge -\alpha(h(x))-\delta,
\end{aligned}
\label{eq:minubarrier}
\end{equation}
$\delta\in\R$, which solves the problem in \eqref{eq:minuJ}, as shown in the following proposition.

\begin{proposition}\label{eq:equivalence}
The solution of the optimization problem \eqref{eq:minubarrier}, where $h(x)$ is given by \eqref{eq:hj} and $\alpha$ is an extended class $\mc K$ function, solves \eqref{eq:minuJ}, driving the state $x$ of the dynamical system $\dot x = u$ to a stationary point of the cost $J$.
\label{prop:equivalence}
\end{proposition}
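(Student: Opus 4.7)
The plan is to solve the QP \eqref{eq:minubarrier} in closed form via KKT conditions, recognize that the optimal $u$ turns out to be a nonnegative scalar multiple of the gradient-descent direction $-\frac{\de J}{\de x}\tr$, and then invoke the Lyapunov/LaSalle argument already recorded in Observation~\ref{obs:lyapbarr} to conclude convergence of $x(t)$ to a stationary point of $J$.

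First I would form the Lagrangian
\[
L(u,\delta,\lambda)=\|u\|^2+|\delta|^2-\lambda\!\left(\frac{\de h}{\de x}u+\alpha(h(x))+\delta\right),
\]
observe that the cost is a strictly convex quadratic and the single inequality is affine in $(u,\delta)$, and note that adding the slack $\delta$ guarantees strict feasibility (take $\delta$ large). Hence KKT is necessary and sufficient. Stationarity gives $u=\frac{\lambda}{2}\frac{\de h}{\de x}\tr$ and $\delta=\frac{\lambda}{2}$, together with $\lambda\ge 0$ from dual feasibility.

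Next I would split on complementary slackness. If $\lambda=0$, then $u=\delta=0$ and feasibility forces $\alpha(h(x))\ge 0$, hence $h(x)\ge 0$; combined with $h=-J$ and $J\ge 0$, this forces $J(x)=0$, so $x$ already sits at a global minimum of $J$ and the algorithm has nothing left to do. If $\lambda>0$, the constraint is active, and substituting stationarity into the active constraint yields the scalar equation whose solution is $\lambda=\frac{-2\alpha(h(x))}{\|\de h/\de x\|^2+1}$. This is nonnegative precisely because $h(x)\le 0$ and $\alpha$ is extended class $\mc K$. Re-substituting $h=-J$ then shows that the QP optimizer has the form $u=-\kappa(x)\frac{\de J}{\de x}\tr$ for a nonnegative scalar $\kappa(x)$, i.e.\ exactly the gradient-flow direction \eqref{eq:optimalucost}, rescaled.

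Finally, the chain rule as in \eqref{eq:negativejdot} gives $\dot J=\frac{\de J}{\de x}u=-\kappa(x)\left\|\frac{\de J}{\de x}\right\|^2\le 0$, vanishing only when $\frac{\de J}{\de x}=0$ or $J(x)=0$. With $V\equiv J$ from Observation~\ref{obs:lyapbarr} as a Lyapunov function, LaSalle's Invariance Principle on any compact sublevel set of $J$ then drives $x(t)$ to the set $\bigl\{x \mid \frac{\de J}{\de x}(x)=0\bigr\}$, i.e.\ to a stationary point of $J$, which is exactly what \eqref{eq:minuJ} asks for. The main obstacle is the sign bookkeeping across the substitution $h=-J$---in particular, tracking that an extended class $\mc K$ function flips sign with its argument, so that both $\lambda$ and $\kappa(x)$ come out nonnegative; once that ledger is in order, the rest is the routine LaSalle argument already indicated in Observation~\ref{obs:lyapbarr}.
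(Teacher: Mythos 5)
Your proof is correct and follows essentially the same route as the paper's: the KKT conditions yield the closed-form optimizer $u^\ast = \alpha(-J(x))\frac{\de J}{\de x}\tr\big/\big(1+\|\frac{\de J}{\de x}\|^2\big)$, a nonnegative rescaling of the gradient-descent direction, after which the sign analysis of $\dot J$ and the LaSalle argument of Observation~\ref{obs:lyapbarr} give convergence to a stationary point. Your only additions are the explicit Slater-condition justification for why KKT is necessary and sufficient, which the paper leaves implicit.
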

\begin{proof}
The KKT conditions for the problem in \eqref{eq:minubarrier} are
\begin{equation}
\begin{cases}
-\dfrac{\de h}{\de x} u^\ast-\alpha(h(x))-\delta^\ast\le0\\
\lambda^\ast\ge0\\
\lambda^\ast\left(-\dfrac{\de h}{\de x} u^\ast-\alpha(h(x))-\delta^\ast\right)=0\\
\begin{bmatrix}
2u^\ast\\2\delta^\ast
\end{bmatrix}+\lambda^\ast\begin{bmatrix}
-\dfrac{\de h}{\de x}\tr\,\\
-1
\end{bmatrix}=0,
\end{cases}
\label{eq:kkt}
\end{equation}
where $u^\ast,\delta^\ast$ and $\lambda^\ast$ are primal and dual optimal points \cite{boyd2004convex}.
First of all, we note that, if $\lambda^\ast=0$, then $u^\ast=0$ by the fourth equation in \eqref{eq:kkt}. Therefore, from the first equation in \eqref{eq:kkt}, $-\alpha(h(x))\le0$. This is equivalent to $-\alpha(-J(x))\le0$ and, since $J(x)\ge0$, this implies that $J(x)=0$. In case $\lambda^\ast>0$, from the third and fourth equation in \eqref{eq:kkt}, one has $\lambda^\ast = -2\alpha(h(x))\left(1+\|\frac{\de h}{\de x}\|^2\right)\inv$, and therefore, $u^\ast = -\alpha(h(x))\frac{\de h}{\de x}\tr\left(1+\|\frac{\de h}{\de x}\|^2\right)\inv$.
Since $J(x)\ge0~\forall x\in\R^n$ and $J$ is continuously differentiable, one can show that $J(\bar x)=0\Rightarrow\left.\frac{\de J}{\de x}\right\vert_{x=\bar x} = 0$. Thus, we can unify the two cases, $\lambda^\ast=0$ and $\lambda^\ast>0$, and write the expression of the optimal $u$ as follows:
\begin{equation}\label{eq:optimaluconstraint}
u^\ast = \frac{\alpha(-J(x))\frac{\de J}{\de x}\tr}{1+\|\frac{\de J}{\de x}\|^2}.
\end{equation}
With this expression of the input $u$, the evolution in time of the cost $J$ is given by
\begin{equation}
\dot J = \frac{\de J}{\de x}\dot x = \frac{\de J}{\de x} u^\ast= \frac{\alpha(-J(x))\|\frac{\de J}{\de x}\|^2}{1+\|\frac{\de J}{\de x}\|^2}.
\end{equation}
So,
\begin{equation}
\frac{\de J}{\de x}\neq0\Rightarrow\dot J<0 \qquad\text{and}\qquad \frac{\de J}{\de x}=0\Rightarrow\dot J=0.
\end{equation}
Hence, as $t\to\infty$, $x(t)\rightarrow x^\ast$, such that $\frac{\de J}{\de x}(x^\ast) = 0$.
\end{proof}
\begin{corollary}
Under the same hypotheses as in Proposition~\ref{prop:equivalence} and $J$ such that $\frac{\de J}{\de x} = 0 \Leftrightarrow x=0$, the solution of the optimization program
\begin{equation}\label{eq:nodelta}
\begin{aligned}
\min_u~&\|u\|^2\\
\st~&\frac{\de h}{\de x} u \ge -\alpha(h(x)),
\end{aligned}
\end{equation}
solves the problem in \eqref{eq:minuJ}.
\end{corollary}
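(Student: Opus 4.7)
The plan is to mimic the KKT analysis used in the proof of Proposition~\ref{prop:equivalence}, but applied to the simpler program \eqref{eq:nodelta} from which the slack variable $\delta$ has been dropped. The KKT system collapses to the first three conditions of \eqref{eq:kkt} together with the stationarity relation $2u^\ast - \lambda^\ast \frac{\de h}{\de x}\tr = 0$, so the structural skeleton of the argument is inherited for free; only the bookkeeping of the $\delta$-dependent terms disappears.

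As in the proposition, I would split into the two cases $\lambda^\ast = 0$ and $\lambda^\ast > 0$. When $\lambda^\ast = 0$, the stationarity equation gives $u^\ast = 0$ and primal feasibility forces $\alpha(h(x)) \ge 0$; since $\alpha$ is extended class $\mc K$ and $h(x) = -J(x) \le 0$, this entails $J(x) = 0$, and the extra hypothesis $\frac{\de J}{\de x} = 0 \Leftrightarrow x = 0$ identifies the state with the origin, a stationary point of $J$. When $\lambda^\ast > 0$, combining complementary slackness with stationarity yields $u^\ast = -\alpha(h(x)) \frac{\de h}{\de x}\tr / \|\frac{\de h}{\de x}\|^2$, i.e., the same expression as \eqref{eq:optimaluconstraint} but without the regularizing $1$ added to the denominator. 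Substituting $h = -J$ and computing $\dot J = \frac{\de J}{\de x} u^\ast$ then gives $\dot J = \alpha(-J(x)) \le 0$ along the closed-loop trajectory, with equality precisely when $J(x) = 0$, so the state converges to a stationary point of $J$, which by the uniqueness hypothesis is the origin.

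The main obstacle I anticipate is verifying that the denominator $\|\frac{\de h}{\de x}\|^2 = \|\frac{\de J}{\de x}\|^2$ in the expression for $u^\ast$ is nonzero wherever it is invoked; this was automatic in \eqref{eq:optimaluconstraint} because the slack variable both added a $+1$ offset to the denominator and decoupled primal feasibility from the direction of $\frac{\de h}{\de x}$. The new hypothesis $\frac{\de J}{\de x} = 0 \Leftrightarrow x = 0$ is exactly what lets one dispense with $\delta$: away from $x=0$ the gradient is nonzero, so the expression for $u^\ast$ is well-defined and the constraint in \eqref{eq:nodelta} admits a solution, while at $x=0$ the gradient vanishes and only the case $\lambda^\ast = 0$ is compatible with the KKT system, with $u^\ast = 0$ trivially meeting the constraint. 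Once feasibility has been secured at every state along the trajectory, the Lyapunov-type reasoning used at the end of the proof of Proposition~\ref{prop:equivalence} transfers verbatim and delivers convergence of $x$ to a stationary point of $J$, i.e., a solution of \eqref{eq:minuJ}.
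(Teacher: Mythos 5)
Your proposal is correct and follows essentially the same route as the paper: the paper's proof likewise repeats the KKT analysis of Proposition~\ref{prop:equivalence} without the slack variable, arrives at $u^\ast = \alpha(-J(x))\frac{\de J}{\de x}\tr/\|\frac{\de J}{\de x}\|^2$, deduces $\dot J = \alpha(-J(x))$, and concludes $J\to0$ and convergence to a stationary point. Your extra care about where the denominator $\|\frac{\de J}{\de x}\|^2$ vanishes is a point the paper glosses over, but it does not change the argument.
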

\begin{proof}
Proceeding similarly to the proof of Proposition~\ref{prop:equivalence}, the solution to \eqref{eq:nodelta} evaluates to
%\begin{equation}
$u^\ast = \frac{\alpha(-J(x))\frac{\de J}{\de x}\tr}{\|\frac{\de J}{\de x}\|^2}$,
%\end{equation}
and, therefore, $\dot J = \alpha(-J(x))$. Thus, $J\rightarrow0$ as $t\rightarrow\infty$ \cite{khalil1996noninear}. Hence, as $t\to\infty$, $x(t)\rightarrow x^\ast$ such that $\frac{\de J}{\de x}(x^\ast) = 0$, and so $J(x^\ast) = 0$.
\end{proof}

In summary, we saw that the expression for $u$ given in \eqref{eq:optimaluconstraint}
solves the initial optimization problem \eqref{eq:minuJ}, which can be equivalently solved following the gradient flow of the cost $J$ using \eqref{eq:optimalucost}.

We now illustrate that, besides the advantages related to long-term autonomy applications discussed in Section~\ref{sec:intro}, the formulation in \eqref{eq:minubarrier} can be used to design decentralized cost minimization algorithms that are faster than gradient descent. In the optimization literature, there are plenty of methods that can be employed to improve the convergence speed of gradient flow algorithms (see, e.\,g., \cite{boyd2004convex}). Nevertheless, these \textit{second order} methods, such as Newton's method or conjugate gradient, suffer from their centralized nature. Only in some cases, this issue can be partially mitigated by resorting to distributed optimization techniques, such as \cite{wei2013distributed,boyd2011distributed}. The above-mentioned methods are all suitable for minimizing a cost function. However, we insist on having a constrained optimization formulation--where we encode cost minimization as a constraint--because of the flexibility and robustness properties discussed in Section~\ref{sec:intro}, useful for long-term robot autonomy applications.

The following proposition shows that, using the formulation in \eqref{eq:minubarrier}, it is possible to minimize the cost $J$ and to be not just faster than gradient descent, but actually to reach a stationary point in finite time.

\begin{proposition}\label{prop:finitetime}
Given the dynamical system $\dot x = u$ and the objective of minimizing the cost function $J$, the solution of the optimization problem
\begin{equation}
\begin{aligned}
\min_{u,\delta}~&\|u\|^2+|\delta|^2\\
\st~&\frac{\de h}{\de x} u \ge -c(h(x))^\gamma-\delta,
\end{aligned}
\end{equation}
where $h(x)$ is given by \eqref{eq:hj}, $c>0$ and $\gamma\in(0,1)$, will drive the state $x$ to a stationary point of the cost $J$ in finite time.
\end{proposition}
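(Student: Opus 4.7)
The plan is to mimic the KKT-based analysis carried out in the proof of Proposition~\ref{prop:equivalence} and then invoke Theorem~\ref{thm:finitetimecbf} with the control Lyapunov function $V(x)=J(x)$ already identified in Observation~\ref{obs:lyapbarr}. The key change relative to Proposition~\ref{prop:equivalence} is that the extended class $\mc K$ term $\alpha(h(x))$ is replaced by the sublinear comparison term $c(h(x))^\gamma$, which is precisely the nonlinearity required to upgrade asymptotic stability to finite-time stability.

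First I would write down the KKT conditions for the quadratic program in $(u,\delta)$, structurally identical to~\eqref{eq:kkt} but with $\alpha(h(x))$ replaced by $c(h(x))^\gamma$ (where the fractional power is read as the odd extension $\text{sign}(h)|h|^\gamma$, so that it still defines an extended class $\mc K$ comparison function). As in Proposition~\ref{prop:equivalence}, I would split on the value of the dual variable $\lambda^\ast$. If $\lambda^\ast=0$, stationarity forces $u^\ast=0$ and $\delta^\ast=0$, and primal feasibility then forces $h(x)\ge 0$, i.e., $J(x)=0$, so the state is already at a stationary point of $J$. If $\lambda^\ast>0$, complementary slackness together with stationarity yields explicit expressions for $u^\ast$ and $\delta^\ast$ obtained from~\eqref{eq:optimaluconstraint} by substituting $c(J(x))^\gamma$ for $\alpha(-J(x))$.

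Next, I would substitute the optimal $u^\ast$ into $\dot J = \frac{\de J}{\de x}u^\ast$ to evaluate the decay of the cost along closed-loop trajectories. The goal is to reach an inequality of the form $\dot V + c' V^\gamma \le 0$ for some $c'>0$, which is exactly the hypothesis required by Theorem~\ref{thm:finitetimecbf} applied to $V=J$. Once this inequality is in place, finite-time convergence of $J$ to zero---and hence finite-time convergence of $x$ to a stationary point of $J$---follows directly, together with the explicit settling-time bound $T \le \frac{1}{c'(1-\gamma)}(J(x_0))^{1-\gamma}$ supplied by the same theorem.

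The step I expect to be the main obstacle is handling the slack variable $\delta^\ast$ cleanly in this last inequality. A direct substitution of $u^\ast$ and $\delta^\ast$ gives a decay rate of the form $\dot J = -cJ^\gamma\,\|\frac{\de J}{\de x}\|^2/(1+\|\frac{\de J}{\de x}\|^2)$, whose coefficient degrades as the gradient becomes small, so passing from this raw estimate to a clean $\dot V \le -c'V^\gamma$ bound requires an additional structural hypothesis on $J$---most naturally the same assumption used in the corollary following Proposition~\ref{prop:equivalence}, namely that stationary points of $J$ coincide with the zero level set---so that $\|\frac{\de J}{\de x}\|$ can be bounded away from zero on every sublevel set of interest, or equivalently so that the term $\delta^\ast$ does not obstruct the finite-time estimate.
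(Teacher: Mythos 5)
Your proposal follows essentially the same route as the paper: write the KKT conditions of the quadratic program, extract the closed-form $u^\ast$, obtain the decay rate $\dot J = -c(J(x))^\gamma\|\frac{\de J}{\de x}\|^2/(1+\|\frac{\de J}{\de x}\|^2)$, and invoke Theorem~\ref{thm:finitetimecbf}. The obstacle you flag---that the factor $\|\frac{\de J}{\de x}\|^2/(1+\|\frac{\de J}{\de x}\|^2)$ can become arbitrarily small, so the raw estimate does not directly deliver the hypothesis $\dot V \le -c'V^\gamma$ required by the finite-time theorem---is real, and the paper's own proof does not resolve it either (it writes the same decay rate and immediately asserts finite-time convergence, deferring to a citation); your proposed remedy of bounding $\|\frac{\de J}{\de x}\|$ away from zero on the relevant sublevel sets is therefore an honest strengthening of the argument rather than a deviation from it.
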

\begin{proof}
Similarly to what has been done in Propositon~\ref{prop:equivalence}, it can be shown that
\begin{equation}
\dot J = \frac{-c(J(x))^\gamma\|\frac{\de J}{\de x}\|^2}{1+\|\frac{\de J}{\de x}\|^2}.
\end{equation}
Thus, by Theorem~\ref{thm:finitetimecbf}, we conclude that
\begin{equation}
\frac{\de J}{\de x}\neq0 \Rightarrow J\rightarrow 0~\text{in finite time},
\end{equation}
and
\begin{equation}
\frac{\de J}{\de x}=0 \Rightarrow \dot J=0.
\end{equation}
Hence, $x\rightarrow x^\ast$, with $\frac{\de J}{\de x}(x^\ast)=0$, in finite time. Indeed, as shown in \cite{li2018formally}, $h(x)$ such that $\dot h\ge -c(h(x))^\gamma$ is a finite-time convergence control barrier function for the system characterized by single integrator dynamics, $\dot x=u$.
\end{proof}

Using the results derived in this section, the next section presents a procedure to synthesize decentralized optimization problems whose solutions result in coordinated control of multi-robot systems.

\section{CONSTRAINT-BASED CONTROL OF MULTI-ROBOT SYSTEMS}
\label{sec:multirobot}

\textit{Local}, \textit{scalable}, \textit{safe} and \textit{emergent} are four essential features that decentralized multi-robot coordinated control algorithms should possess \cite{cortes2017coordinated}. Many algorithms that satisfy these properties have been developed for applications ranging from social behavior mimicking \cite{reynolds1987flocks}, formation assembling \cite{egerstedt2001formation} and area patrolling \cite{cortes2004coverage}.
In \cite{cortes2017coordinated}, the authors analyze the common features among these algorithms and discuss their decentralized implementations in robotic applications. In this section, we apply the results derived in Section~\ref{sec:constraintbasedcontrol} with the aim of obtaining constrained optimization problems equivalent to the decentralized execution of multi-robot tasks.

Consider a collection of $N$ robots, whose position is denoted by $x_i \in\R^d,~i\in\{1,\ldots,N\}$, where $d=2$ for planar robots and $d=3$ in the case of aerial robots. Assume each robot is equipped with an omni-directional range sensor that allows it to measure the relative position of neighboring robots, namely robot $i$ is able to measure $x_j-x_i$, when robot $j$ is within its sensing range. These interactions among the robots are described by a graph $\mc G=(\mc V,\mc E)$, where $\mc V = \{1,\ldots,N\}$ is the set of vertices of the graph, representing the robots, and $\mc E \subseteq \mc V\times\mc V$ is the set of edges between the robots, encoding adjacency relationships. If $(i,j)\in\mc E$, then robot $i$ can measure robot $j$'s position. For the purposes of this paper, we assume that the graph is undirected, namely $(i,j)\in\mc E \Leftrightarrow (j,i)\in\mc E$. In order to obtain decentralized algorithms, we want each robot to act only based on local information, by which we mean the relative positions of its neighbors. By construction, this leads to inherently scalable coordinated control algorithms.

Denoting the ensemble state of the robotic swarm by $x = [x_1\tr,\ldots,x_N\tr]\tr\in\R^{Nd}$, a general expression for the cost that leads to decentralized control laws is given by
\begin{equation}
J(x) = \sum_{i=1}^N \sum_{j\in\mc N_i} J_{ij}(\| x_i-x_j \|),
\label{eq:pairwisecost}
\end{equation}
where $\mc N_i$ is the neighborhood set of robot $i$, and $J_{ij}:\R\to\R$, $J_{ij}(\|x_i-x_j\|) = J_{ji}(\|x_j-x_i\|)$ is a symmetric, pairwise cost between robots $i$ and $j$. We assume that $J_{ij}(x)\ge0,~\forall (i,j) \in\mc E,~\forall x\in\R^n$, so that $J(x)\ge0,~\forall x\in\R^n$. Assuming we can directly control the velocity of robot $i$, $\dot x_i$, we can employ a gradient descent flow policy like \eqref{eq:optimalucost} to minimize $J$, obtaining
\begin{equation}
u_i = -\sum_{j\in\mc N_i} \frac{\de J_{ij}}{\de \|x_i-x_j\|} \frac{x_i-x_j}{\|x_i-x_j\|} = \sum_{j\in\mc N_i} w_{ij} (x_j-x_i).
\end{equation}
This is nothing but a weighted consensus protocol, and it is decentralized insofar as the input $u_i$ only depends on robot $i$'s neighbors. The construction shown in Section~\ref{sec:constraintbasedcontrol} can be then applied to minimize the cost given in \eqref{eq:pairwisecost} by formulating the following minimum-energy problem:
\begin{equation}\label{eq:minuforJ}
\begin{aligned}
\min_{u,\delta}~&\|u\|^2+|\delta|^2\\
\st~&-\frac{\de J}{\de x} u \ge -\alpha(-J(x))-\delta,
\end{aligned}
\end{equation}
where $u = [u_1\tr,\ldots,u_N\tr]\tr\in\R^{Nd}$ is the vector of robots' inputs, and a single integrator dynamics, $\dot x_i = u_i$, is assumed for each robot. Solving \eqref{eq:minuforJ} leads to the accomplishment of the task, by which we mean that a stationary point of the cost $J$ has been reached. As explicitly shown by \eqref{eq:optimaluconstraint} in Proposition~\ref{prop:equivalence}, a minimum-energy formulation, initially introduced in \cite{ames2014control}, allows the robots %to navigate the energy landscape shaped by the cost $J$,
to move towards lower values of the cost $J$ until the task is accomplished ($u\equiv0$).

The following proposition gives the expression of the optimization problems whose solutions lead to a decentralized minimization of the cost $J$ in \eqref{eq:pairwisecost}.

\begin{proposition}[Constraint-driven decentralized task execution]\label{prop:decentralizedtask}
Given the pairwise cost function $J$ defined in \eqref{eq:pairwisecost}, a collection of $N$ robots, characterized by single integrator dynamics, minimizes $J$ in a decentralized fashion, if each robot executes the control input, solution of the following optimization problem:
\begin{equation}\label{eq:optuJ}
\begin{aligned}
\min_{u_i,\delta_i}~&\|u_i\|^2+|\delta_i|^2\\
\st~&-\frac{\de J_i}{\de x_i} u_i \ge -\alpha(-J_i(x))-\delta_i,
\end{aligned}
\end{equation}
where $J_i(x) = \sum_{j\in\mc N_i} J_{ij}(\| x_i-x_j \|)$ and $\alpha$ is an extended class $\mc K$ function, $\alpha : x\in\R \mapsto \alpha(x)\in\R$, superadditive for $x<0$, i.\,e. $\alpha(x_1+x_2)\ge\alpha(x_1)+\alpha(x_2),~\forall x_1,x_2<0$. If $\alpha(x)=cx^\gamma$, $c>0$, $\gamma\in(0,1)$, a stationary point of the cost $J$ is reached in finite time, with the upper bounds on the settling time given in Theorem~\ref{thm:finitetimecbf}. 
\end{proposition}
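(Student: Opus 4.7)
The plan is to reduce the networked claim to the single-agent analysis of Propositions~\ref{prop:equivalence} and~\ref{prop:finitetime}, exploiting the pairwise structure of $J$ together with the superadditivity of $\alpha$ on the negative reals. The first step is to check decentralization: because $J_i(x)$ involves only $x_i$ and $\{x_j\}_{j\in\mc N_i}$, both the objective and the constraint of \eqref{eq:optuJ} can be assembled by robot $i$ using only relative measurements to its neighbors, so the program really is local.

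Next I would compute $\dot J$ along closed-loop trajectories. The key identity is that, by the symmetry $J_{ij}=J_{ji}$, each edge contributes the same value to $J_i$ and to $J_j$, which gives $J(x)=\sum_{i} J_i(x)$ and, by differentiation, $\partial J/\partial x_i = 2\,\partial J_i/\partial x_i$. Applying the KKT characterization from the proof of Proposition~\ref{prop:equivalence} to each local program yields the closed form
\[
u_i^\ast = \frac{\alpha(-J_i(x))}{1+\|\partial J_i/\partial x_i\|^2}\left(\frac{\partial J_i}{\partial x_i}\right)\tr,
\]
which, since $\alpha(-J_i)\le 0$, is a descent direction for $J_i$. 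Substituting yields
\[
\dot J = 2\sum_{i=1}^{N} \frac{\alpha(-J_i(x))\,\|\partial J_i/\partial x_i\|^2}{1+\|\partial J_i/\partial x_i\|^2}\le 0,
\]
with equality forcing $\partial J_i/\partial x_i=0$ (hence $\partial J/\partial x_i=0$) for every $i$; a standard LaSalle-type argument then gives convergence to a stationary point of $J$.

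For the finite-time claim with $\alpha(s)=c\,s^\gamma$, I would aggregate the local bounds using superadditivity on the negative reals to obtain $\sum_i \alpha(-J_i)\le \alpha\bigl(-\sum_i J_i\bigr)=\alpha(-J)=-c\,J^\gamma$. Combined with the pointwise inequality above, this produces a differential inequality that fits the hypothesis of Theorem~\ref{thm:finitetimecbf}, from which the stated settling-time bound follows directly.

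The hardest part will be the last step: the denominators $1+\|\partial J_i/\partial x_i\|^2$ in the closed-form optima degrade the bound, so turning the summed inequality into a clean $\dot J \le -c\,J^\gamma$ requires either restricting attention to a sublevel set of $J$ on which those denominators are uniformly bounded, or invoking the finite-time control barrier function viewpoint used at the end of the proof of Proposition~\ref{prop:finitetime} and applying it to the aggregated constraint. Degenerate configurations where some $\partial J_i/\partial x_i$ vanishes while $J_i>0$ also need attention, but any such point is stationary for the local cost and so lands in the limit-set conclusion without affecting the settling-time bound.
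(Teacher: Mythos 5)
Your proof is essentially correct, but it reaches the conclusion by a somewhat different route than the paper. The paper never writes down the closed-form local optima: it sums the $N$ local constraints, uses superadditivity of $\alpha$ together with $J=\sum_i J_i$ and the undirected-graph identity $\partial J_i/\partial x_i=\tfrac12\,\partial J/\partial x_i$ to recover the \emph{global} constraint $-\frac{\partial J}{\partial x}u\ge-\alpha'(-J(x))-\delta'$, and then simply invokes Propositions~\ref{prop:equivalence} and~\ref{prop:finitetime}. You instead solve each local QP explicitly via KKT, substitute into $\dot J$, and get $\dot J=2\sum_i \alpha(-J_i)\|\partial J_i/\partial x_i\|^2\bigl(1+\|\partial J_i/\partial x_i\|^2\bigr)^{-1}\le0$ directly. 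Your route buys two things: (i) the asymptotic decrease of $J$ follows without invoking superadditivity at all (you only need it for the finite-time claim), and (ii) it is arguably tighter logically, since the paper's step ``the stacked optimum satisfies the aggregated constraint, hence Proposition~\ref{prop:equivalence} applies'' quietly identifies a feasible point of the global program with its minimizer, whereas you verify the descent property of the actual closed-loop input. The cost is that your finite-time step still has to aggregate via superadditivity, and the denominator issue you flag--that $\|\partial J_i/\partial x_i\|^2/(1+\|\partial J_i/\partial x_i\|^2)$ prevents a clean $\dot J\le-c'J^\gamma$ near points where gradients are small but $J>0$--is real; note, however, that it is not a defect you introduced: it is already present, unacknowledged, in the paper's Proposition~\ref{prop:finitetime}, whose proof Proposition~\ref{prop:decentralizedtask} simply inherits. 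Your two proposed repairs (uniform lower bound on a sublevel set, or the finite-time CBF argument of \cite{li2018formally}) are exactly what is needed to make either version rigorous.
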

\begin{proof}
Proposition~\ref{eq:equivalence} ensures that, by imposing the global constraint $-\frac{\de J}{\de x} u \ge -\alpha(-J(x))$, constructed using the whole state vector $x$, the cost $J$ is decreasing towards a stationary point. We want to show that, by imposing only local constraints (i.\,e., such that robot $i$ only needs information about its neighbors), the multi-robot system is able to enforce the global constraint and, hence, to minimize the cost $J$ in a decentralized fashion.

We proceed by starting to sum up the constraints for each robot, obtaining:
\begin{align}
&\sum_{i=1}^N\left( -\frac{\de J_i}{\de x_i}u_i \right) \ge \sum_{i=1}^N (-\alpha(-J_i(x))-\delta_i)\\
\ge&-\alpha\left(-\sum_{i=1}^N J_i(x)\right)-\delta \ge-\alpha\left(-J(x)\right)-\delta,
\end{align}
where we used the superadditivity property of $\alpha$, and we set $\delta = \sum_{i=1}^N\delta_i$. Moreover, since the graph $\mc G$, which encodes the neighboring relations between the robots, is undirected, we have that $\frac{\de J_i}{\de x_i}=\frac{1}{2}\frac{\de J}{\de x_i}$. Thus,
\begin{equation}
\frac{\de J}{\de x}u\ge-2\alpha\left(-J(x)\right)-2\delta=-\alpha^\prime\left(-J(x)\right)-\delta^\prime,
\end{equation}
where $\frac{\de J}{\de x}=\left[\frac{\de J}{\de x_1},\ldots,\frac{\de J}{\de x_N} \right]$, $u=\left[u_1\tr,\ldots,u_N\tr\right]\tr$, and $\alpha^\prime$ an extended class $\mc K$ function. Hence, by Proposition~\ref{prop:equivalence}, $x$ will converge to a stationary point of $J$.

Finally, we note that a class $\mc K$ function $\alpha(x)=cx^\gamma$, defined for $x<0$ is convex, and hence superadditive, for $x<0$. Applying Proposition~\ref{prop:finitetime}, the statement holds.
\end{proof}

The structure of the cost function  $J(x)$, even though quite specific, allows us to encode a rich set of multi-robot tasks, by carefully choosing the weights $w_{ij}$ as a function of the state $x$. The following section shows two variations on the cost function which allow a multi-robot system to perform formation control, i.\,e., assembling particular shapes, and coverage control, consisting in spreading out the robotic swarm in the environment in an optimal way.

\section{APPLICATIONS}
\label{sec:applications}

In this section we recall the expression of the cost $J$ for two specific multi-robot tasks: formation control and coverage control.

\subsection{Formation Control}
\label{subsec:formation}

In formation control applications, the robots are asked to assemble a predefined shape, specified in terms of inter-agent distances. In order to frame this problem as a cost minimization problem, let $J$ be the formation error
\begin{equation}\label{eq:formationcost}
J(x) = \sum_{i=1}^n \sum_{j\in\mc N_i} \frac{1}{2} (\|x_i-x_j\|-d_{ij})^2 = \sum_{i=1}^n J_i(\|x_i-x_j\|),
\end{equation}
where $d_{ij}$ is the desired distance between robots $i$ and $j$. $J$ measures how far the robots are from assembling the desired formation characterized by the relative distances $d_{ij}$. $J=0$ corresponds to the robots forming the desired shape. Note that, as $J_i(x)$ is a sum of squares, $J_i(x) \le J(x),~\forall i$, required as a hypothesis in order for Proposition~\ref{prop:decentralizedtask} to hold.

The gradient the $J_i(x)$ evaluates to
\begin{equation}\label{eq:gradformation}
\frac{\de J_i}{\de x_i} = \sum_{j\in\mc N_i} \frac{\|x_i-x_j\|-d_{ij}}{\|x_i-x_j\|} (x_i-x_j)\tr.
\end{equation}
This can be interpreted as follows: if the distance between robots $i$ and $j$ is smaller than $d_{ij}$, then the weight $w_{ij} = \frac{\|x_i-x_j\|-d_{ij}}{\|x_i-x_j\|}$ is negative, and the robots experience a repelling effect. Conversely, if the two robots are further than $d_{ij}$ apart, the positive weight $w_{ij}$ will attract one towards the other. The special case in which $d_{ij} = 0,~\forall i,j$ corresponds to the well-known consensus problem.

The expression of the gradient in \eqref{eq:gradformation} is decentralized as robot $i$ has to compute relative distances only with respect to its neighboring robots.

\subsection{Coverage Control}
\label{subsec:coverage}

In coverage control, the task given to the robots is that of covering a domain $D$. Given a coverage performance measure, the robots should spread over the domain in an optimal way. As shown in \cite{lloyd1982least}, each robot should be in charge only of a subset of the domain $D$ that, more specifically, is its Voronoi cell, defined as $V_i = \{ p\in D~|~\|p-x_i\|\le\|p-x_j\|~\forall i\neq j \}$.

Let us introduce the measure of how bad a domain is being covered:
\begin{equation}\label{eq:loccost}
J(x) = \sum_{i=1}^{N}\frac{1}{2} \| x_i-G_i(x) \|^2 = \sum_{i=1}^n J_i(x),
\end{equation}
where $G_i$ denotes the centroid of the Voronoi cell $V_i$. This form is just a reformulation of the locational cost originally introduced in \cite{cortes2004coverage}. Taking the derivative of $J_i$ with respect to $x_i$, required in the optimization problem \eqref{eq:optuJ}, one obtains:
\begin{equation}\label{eq:gradientcoverage}
\frac{\de J_i}{\de x_i} = (x_i-G_i(x))\tr\left( I-\frac{\de G_i(x)}{\de x_i} \right),
\end{equation}
where $I$ is the identity matrix. Note that, even if $G_i(x)$ virtually depends on the entire ensemble state, $x$, of the robotic swarm robot $i$, in order to compute it, only requires information from the robots with which it shares part of the boundary of its Voronoi cells.

In the Appendix we show how the formulation presented in this paper also allows an exact decentralized implementation of the coverage control with time-varying density functions introduced in \cite{lee2015multirobot}.

We deployed the optimization-based control algorithms with the expressions of the costs $J$ derived in Sections~\ref{subsec:formation}~and~\ref{subsec:coverage} on a real multi-robot system and, in the next section, we show the experimental results. Moreover, the constraint-driven formulation of Section~\ref{sec:multirobot} is used to achieve long-term environmental monitoring, where the robots are tasked with covering a domain over a time-horizon which is much longer than their battery life, and during which the robots will also have to avoid collisions with obstacles moving around in the domain.

\section{EXPERIMENTS}
\label{sec:experiments}

The coordinated control approach presented in this paper has been tested on the Robotarium \cite{pickem2017robotarium}, a remotely accessible swarm robotics testbed. The Robotarium is populated by small-scale differential-drive robots which can be programmed by uploading code scripts via a web interface.

Throughout the paper, we assumed we can directly control the velocity of the robots, by modeling them using single integrator dynamics. However, a differential-drive robot can be more accurately modeled using unicycle dynamics:
\begin{equation}\label{eq:uni}
\begin{cases}
\dot x = v \cos(\theta)\\
\dot y = v \sin(\theta)\\
\dot \theta = \omega,
\end{cases}
\end{equation}
where $[x, y]\tr$ and $\theta$ are the robot's position and orientation in the plane, respectively, and $v$ and $\omega$ are the linear and angular velocity inputs, respectively. Nevertheless, in \cite{olfati2002near}, it is shown that it is possible to derive a near-identity diffeomorphism that can be used to partially feedback linearize the system \eqref{eq:uni}. This way, the unicycle can be abstracted as a single integrator. This is realized through the invertible map
\begin{equation}
	\begin{bmatrix}
		v\\
		\omega
	\end{bmatrix} = R\tr(\theta) \begin{bmatrix}
		1&0\\
		0&\frac{1}{d}
	\end{bmatrix}\begin{bmatrix}
		\dot x_d\\
		\dot y_d
	\end{bmatrix},
\end{equation}
where $R(\theta)$ is the matrix that rotates vectors in $\R^2$ counterclockwise by an angle $\theta$, and $[\dot x_d, \dot y_d]\tr$ is the velocity in the plane of a point located in front of the unicycle at a distance $d$ from its center.
This method is used to control the robots on the Robotarium, by calculating linear and angular velocities of robot $i$, $v_i$ and $\omega_i$, from the control input 
%$u_i$,
$u_i=[\dot x_{d,i}, \dot y_{d,i}]\tr$,
obtained by solving the optimization problems derived in Section~\ref{sec:multirobot}. 

Regarding the implementation of the optimization program \eqref{eq:optuJ} needed for the execution of the tasks presented in the previous section, the function $\alpha$ has been chosen to be $\alpha(x) = \sqrt[3]{x}$, which is an extended class $\mc K$ function, convex for $x<0$. This implies it is also superadditive for $x<0$, as required by the hypotheses in Proposition~\ref{prop:decentralizedtask}.

\subsection{Formation and Coverage Control}

\begin{figure*}
	\centering
	\hfill\subfloat[][]{\label{subfig:formationa}\includegraphics[trim={7cm 2cm 14cm 4cm}, clip,width=.24\textwidth]{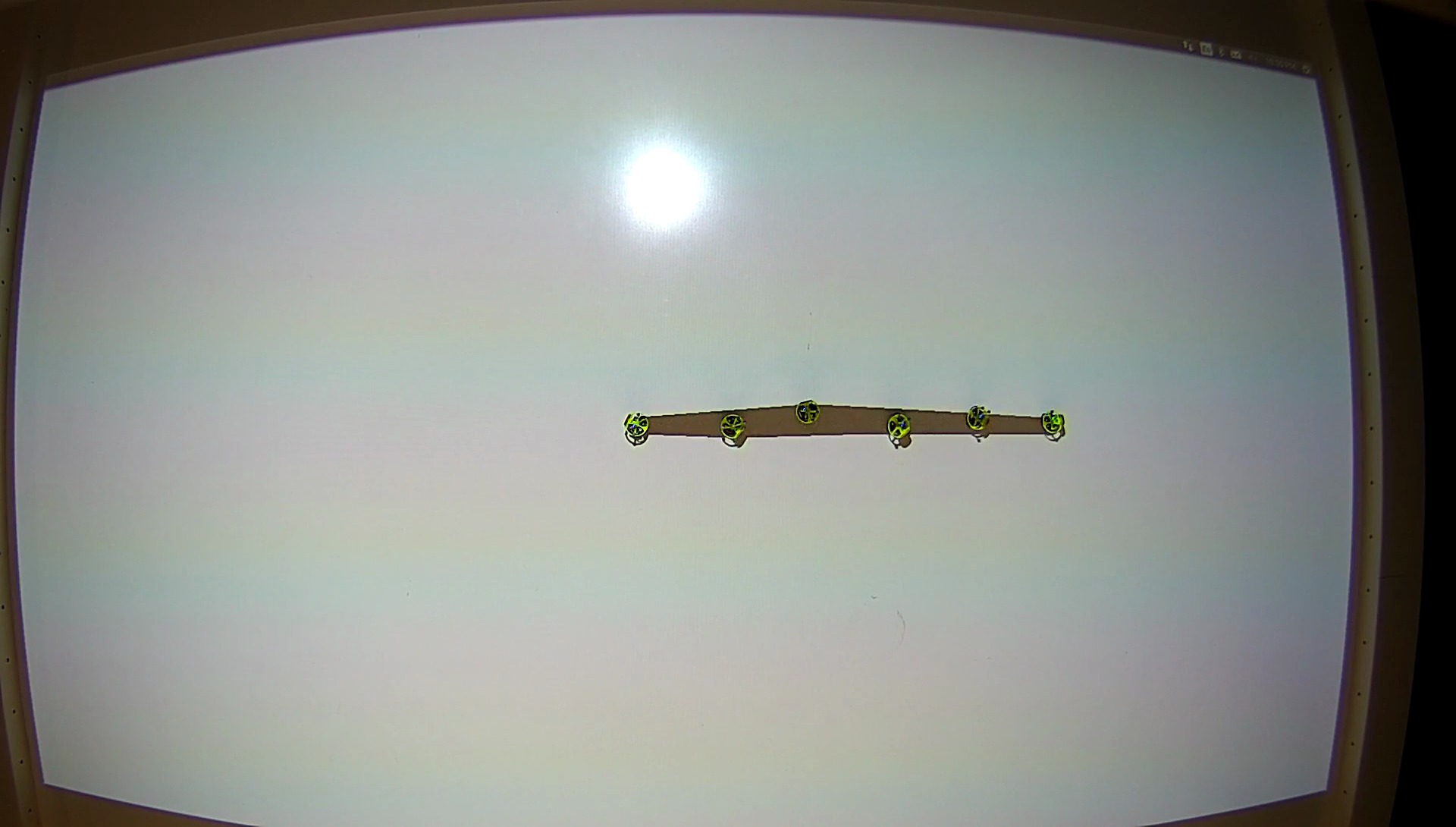}}\hfill
	\subfloat[][]{\label{subfig:formationb}\includegraphics[trim={7cm 2cm 14cm 4cm}, clip,width=.24\textwidth]{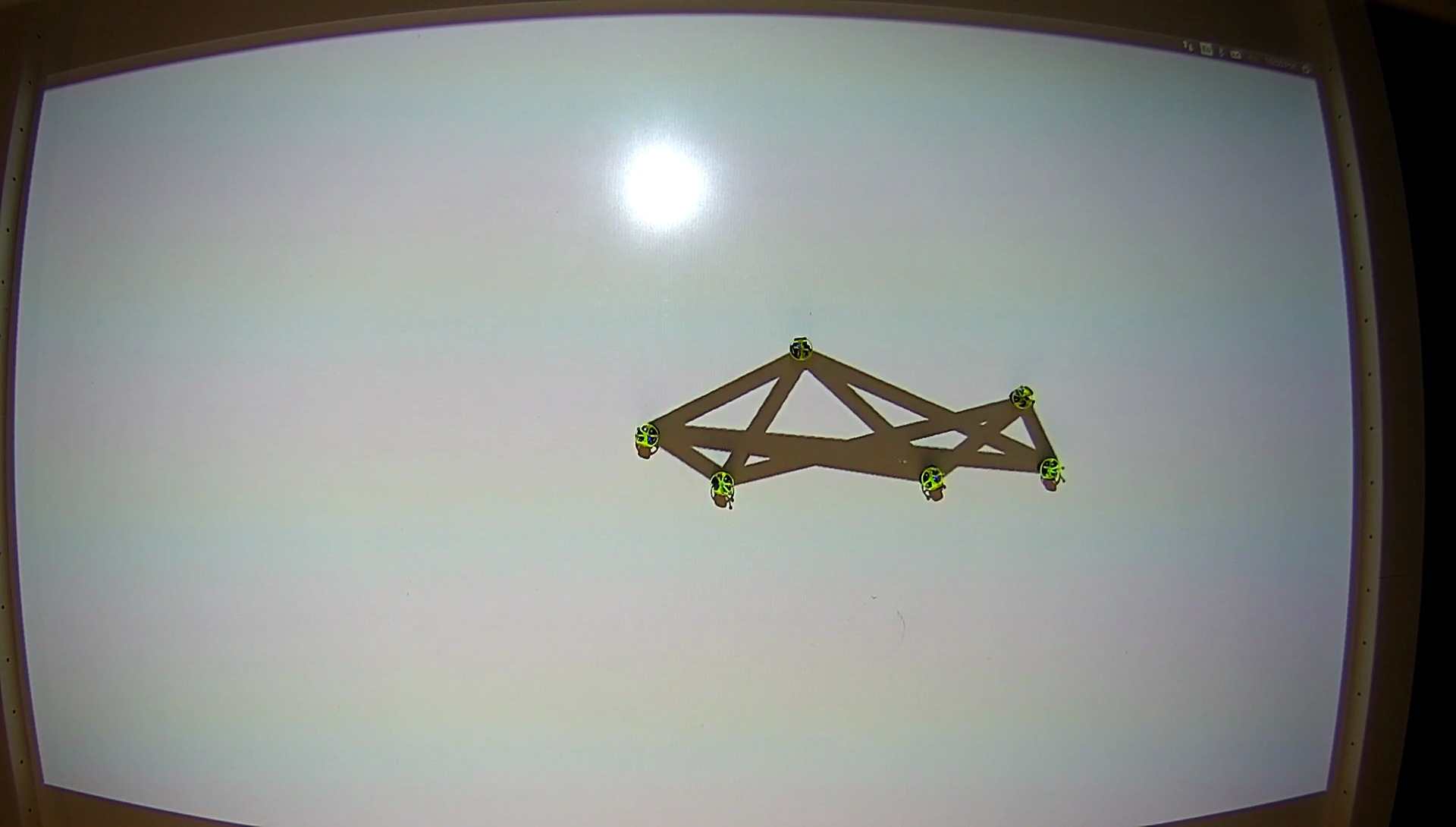}}\hfill
	\subfloat[][]{\label{subfig:formationc}\includegraphics[trim={7cm 2cm 14cm 4cm}, clip,width=.24\textwidth]{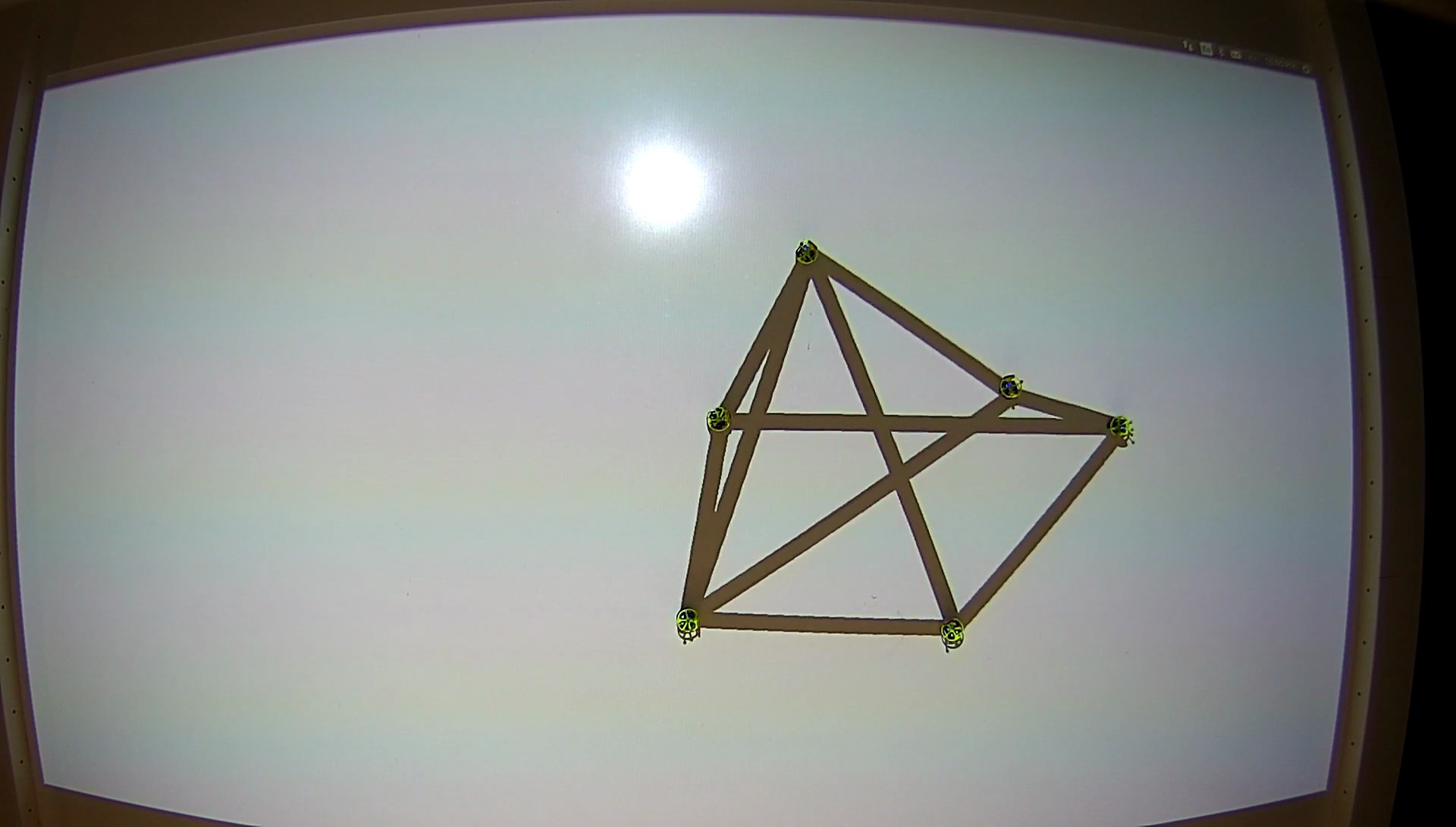}}\hfill
	\subfloat[][]{\label{subfig:formationd}\includegraphics[trim={7cm 2cm 14cm 4cm}, clip,width=.24\textwidth]{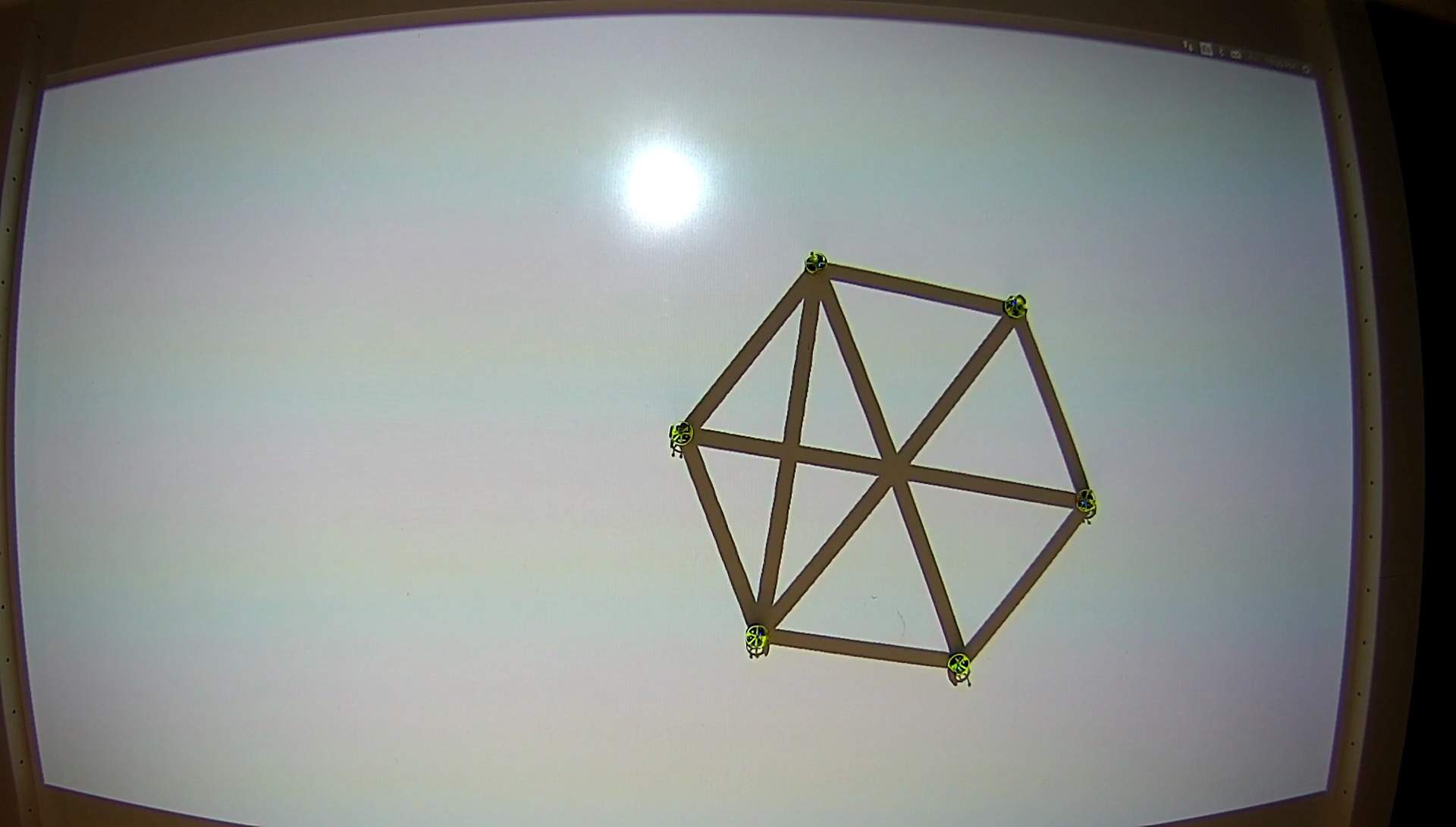}}\hfill
	\caption{A team of six small-scale differential-drive robots on the Robotarium executes formation control using \eqref{eq:formationcost} and \eqref{eq:gradformation} in the optimization program \eqref{eq:optuJ}. The edges encoding maintained distances between robots are projected onto the testbed.}
	\label{fig:formation}
\end{figure*}
\begin{figure*}
	\centering
	\hfill\subfloat[][]{\label{subfig:coveragea}\includegraphics[trim={7cm 2cm 14cm 4cm}, clip,width=.24\textwidth]{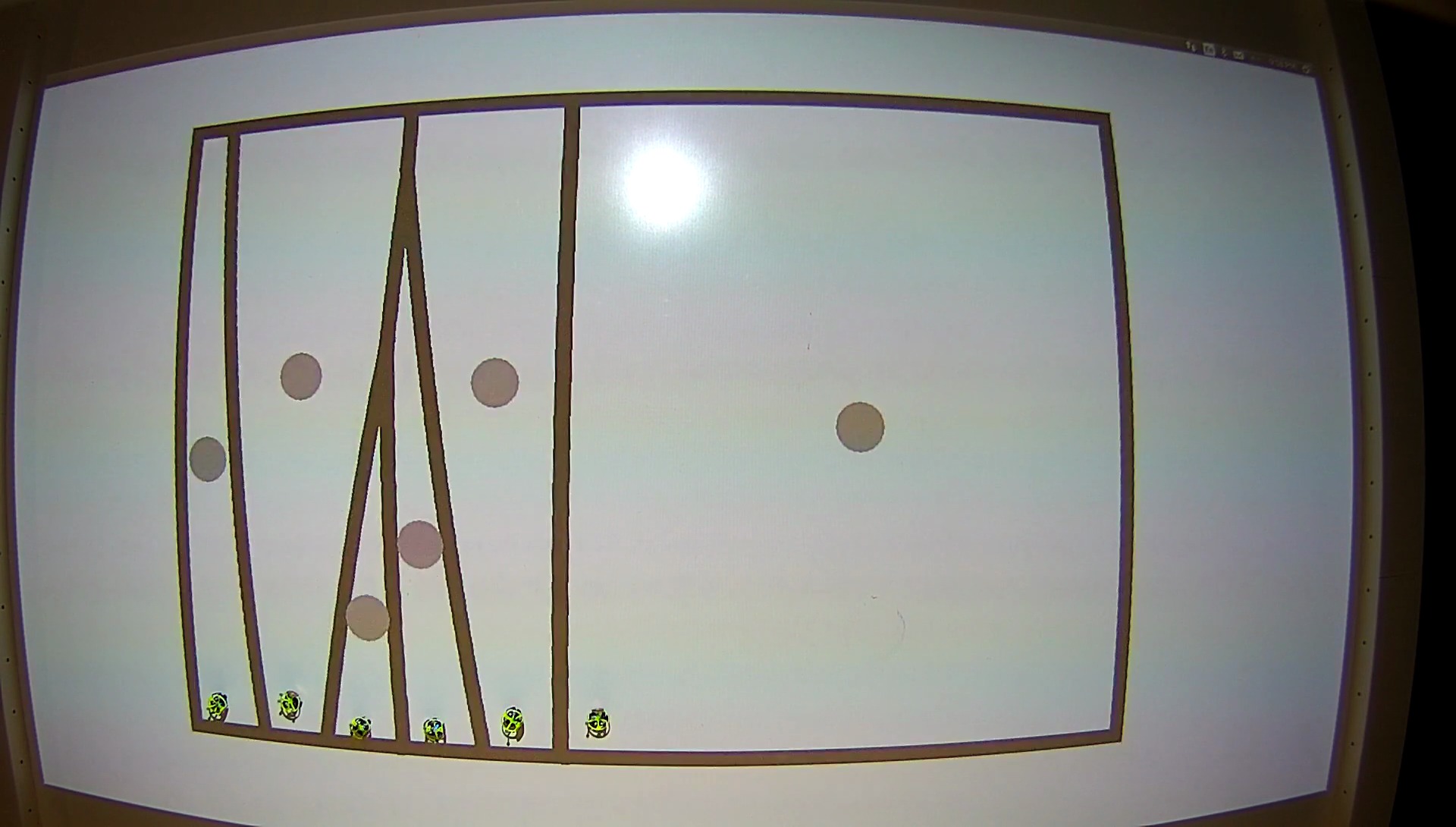}}\hfill
	\subfloat[][]{\label{subfig:coverageb}\includegraphics[trim={7cm 2cm 14cm 4cm}, clip,width=.24\textwidth]{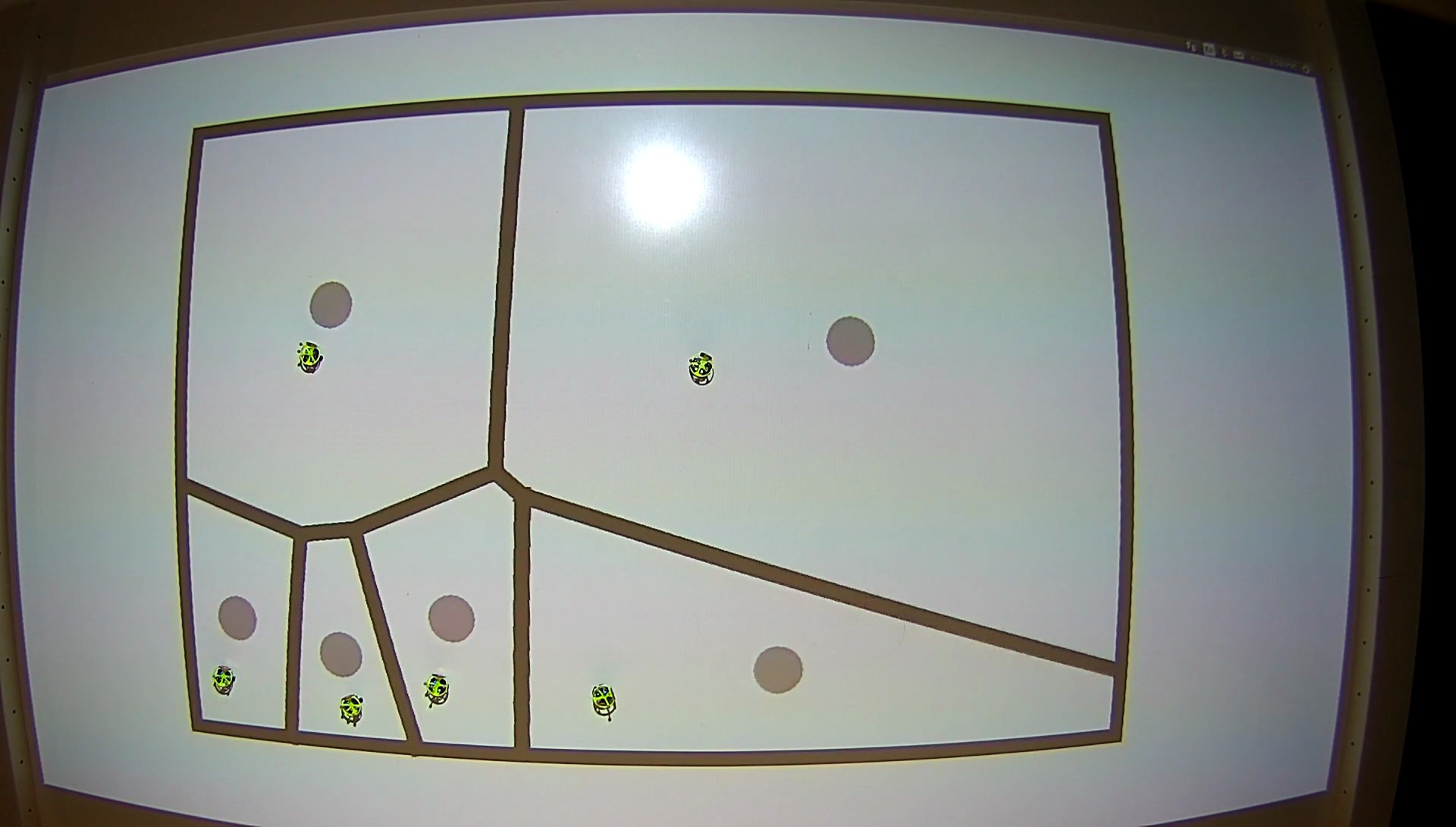}}\hfill
	\subfloat[][]{\label{subfig:coveragec}\includegraphics[trim={7cm 2cm 14cm 4cm}, clip,width=.24\textwidth]{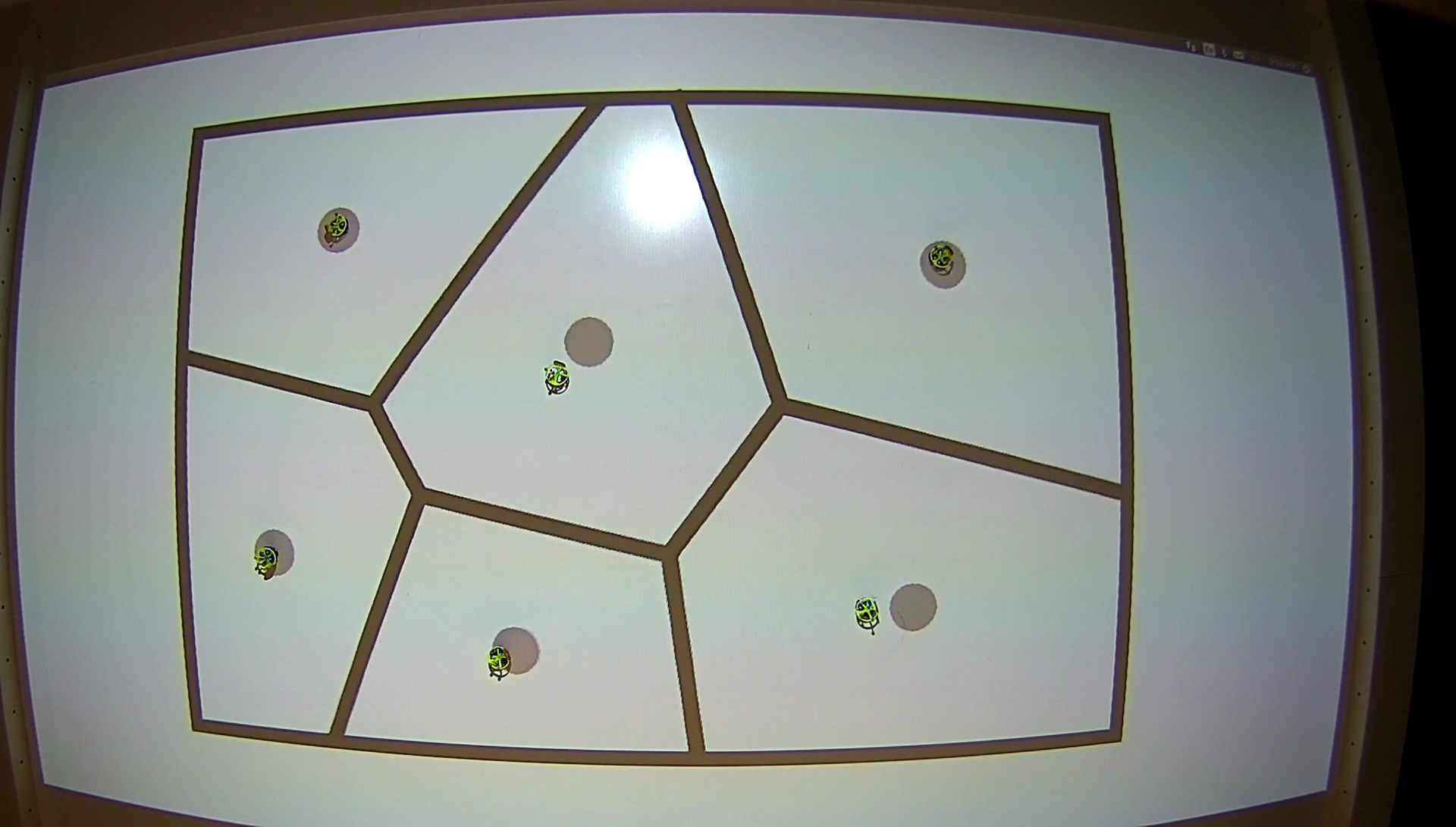}}\hfill
	\subfloat[][]{\label{subfig:coveraged}\includegraphics[trim={7cm 2cm 14cm 4cm}, clip,width=.24\textwidth]{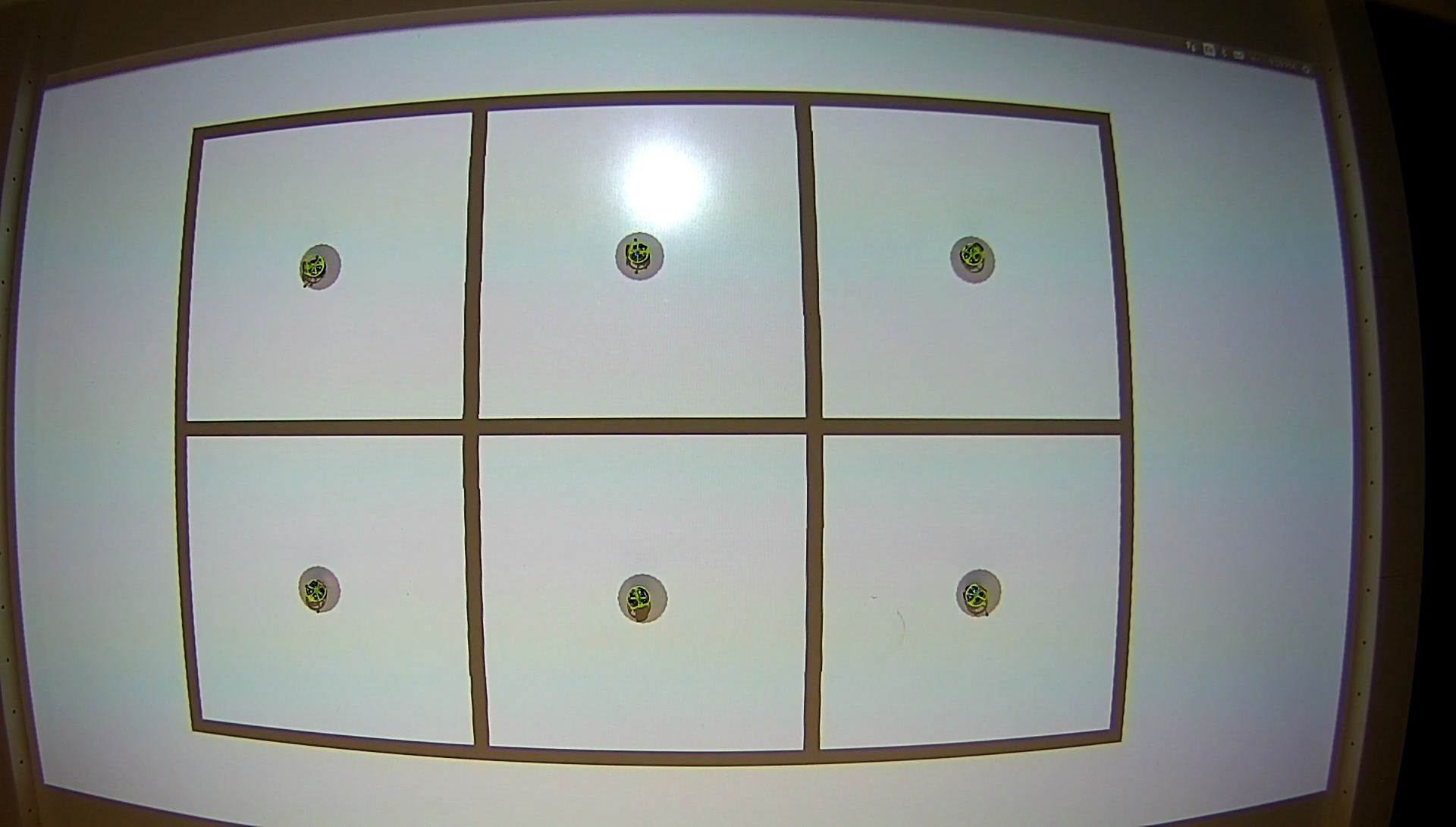}}\hfill
	\caption{A team of six small-scale differential-drive robots performs coverage control of a rectangular area on the Robotarium using \eqref{eq:loccost} and \eqref{eq:gradientcoverage} in \eqref{eq:optuJ}. The Voronoi cells of the robots are projected onto the testbed, together with their centroids, depicted as gray circles.}
	\label{fig:coverage}
\end{figure*}

The optimization problem \eqref{eq:optuJ} has been implemented with the specific expressions of $J_i$ and $\frac{\de J_i}{\de x_i}$ given in \eqref{eq:formationcost} and \eqref{eq:gradformation} in order to achieve formation control, as explained in Section~\ref{subsec:formation}. A sequence of snapshots recorded during the experiments in the Robotarium is shown in Fig.~\ref{fig:formation}: six robots are asked to assemble a hexagon specified through the inter-agent distances $d_{ij}$ in \eqref{eq:formationcost}. The edges corresponding to distances that are maintained are projected down onto the testbed and depicted as black lines in Figures~\ref{subfig:formationa}~to~\ref{subfig:formationd}.

Similarly, coverage control has been implemented using the constraint-based optimization \eqref{eq:optuJ} and the expressions of $J_i$ and $\frac{\de J_i}{\de x_i}$ in \eqref{eq:loccost} and \eqref{eq:gradientcoverage}. The results are shown in Fig.~\ref{fig:coverage}. Six robots are asked to spread over a rectangular domain. The Voronoi partition of the domain is projected on the testbed. As a result of the optimization program, the robots are moving towards the centroids of their respective Voronoi cells, represented as gray circles in Figures~\ref{subfig:coveragea}~to~\ref{subfig:coveraged}.

\subsection{Combining and Prioritizing Tasks}

In this section, we present the application of the proposed constraint-driven coordinated control to long-term environmental monitoring.

The setup of the experiment is as follows. Six robots are asked to monitor an area by performing coverage control. While executing this task, the robots must not run out of energy and must not collide with two dynamic obstacles, embodied by two additional robots moving in the environment. In order to do so, we define constraints that allow the robots to always keep enough residual energy in their batteries and to be always a minimum distance apart from the obstacles.

To accomplish the first goal, we use a method similar to the one developed in \cite{notomista2018persistification}. Assuming that the domain is endowed with charging stations, i.\,e. locations where the robots can recharge their batteries, let us define the following barrier function:
\begin{equation}\label{eq:energybarrier}
h_{e,i}(x_i,E_i) = E_i-E_{min}-k(\|x_{c,i} - x_i\|-d_{chg})^2,
\end{equation}
where $x_i$ is the position of robot $i$, $E_i$ is the energy in its battery, $E_{min}$ is the minimum residual energy we want the robots to keep, $x_{c,i}$ is the location of the charging station dedicated to robot $i$, $d_{chg}$ is the minimum distance from the charging station at which the robots can recharge their batteries (typical behavior of wireless charging technologies), and $k$ is a constant such that $k(\|x_{c,i} - x_i\|-d_{chg})^2$ upper-bounds the energy required to reach a charging station. We refer to \cite{notomista2018persistification} for a rigorous analysis.

As far as obstacle avoidance is concerned, we define, for each obstacle, the following barrier function, which ensures collision-free operations in multi-robot systems \cite{wang2017safety}:
\begin{equation}
h_{o,i}(x_i) = \|x_i-x_o\|^2 - d_o^2,
\end{equation}
where $x_o$ is the position of the obstacle and $d_o$ is the minimum distance we want the robots to maintain from the obstacle.

Combining the energy constraint $\dot h_{e,i}\ge h_{e,i}(x_i,E_i)$ and the obstacle constraint $\dot h_{o,i}\ge h_{o,i}(x_i)$ together with the coverage task constraints, the following optimization problem can be formulated:
\begin{equation}\label{eq:persistence}
\begin{aligned}
\min_{u_i,\delta_i}~&\|u_i\|^2 + \delta_i^2\\
\st~&-\frac{\de J_i}{\de x_i} u_i \ge -\alpha(-J_i(x)) - \delta_i\\
&\hspace{0.8cm}\dot h_{e,i}\ge h_{e,i}(x_i,E_i)\\
&\hspace{0.8cm}\dot h_{o,i}\ge h_{o,i}(x_i).
\end{aligned}
\end{equation}
The variable $\delta_i$ in the coverage constraint acts as a relaxation parameter, which allows the constraints related to energy and collisions to be fulfilled. This translates to \textit{trading task execution for survivability}. Consequently, this formulation allows tasks prioritization obtained by combining hard and soft constraints.

\begin{figure}
	\centering
	\subfloat[][]{\label{subfig:a}\includegraphics[trim={7cm 2cm 14cm 4cm}, clip,width=.35\textwidth]{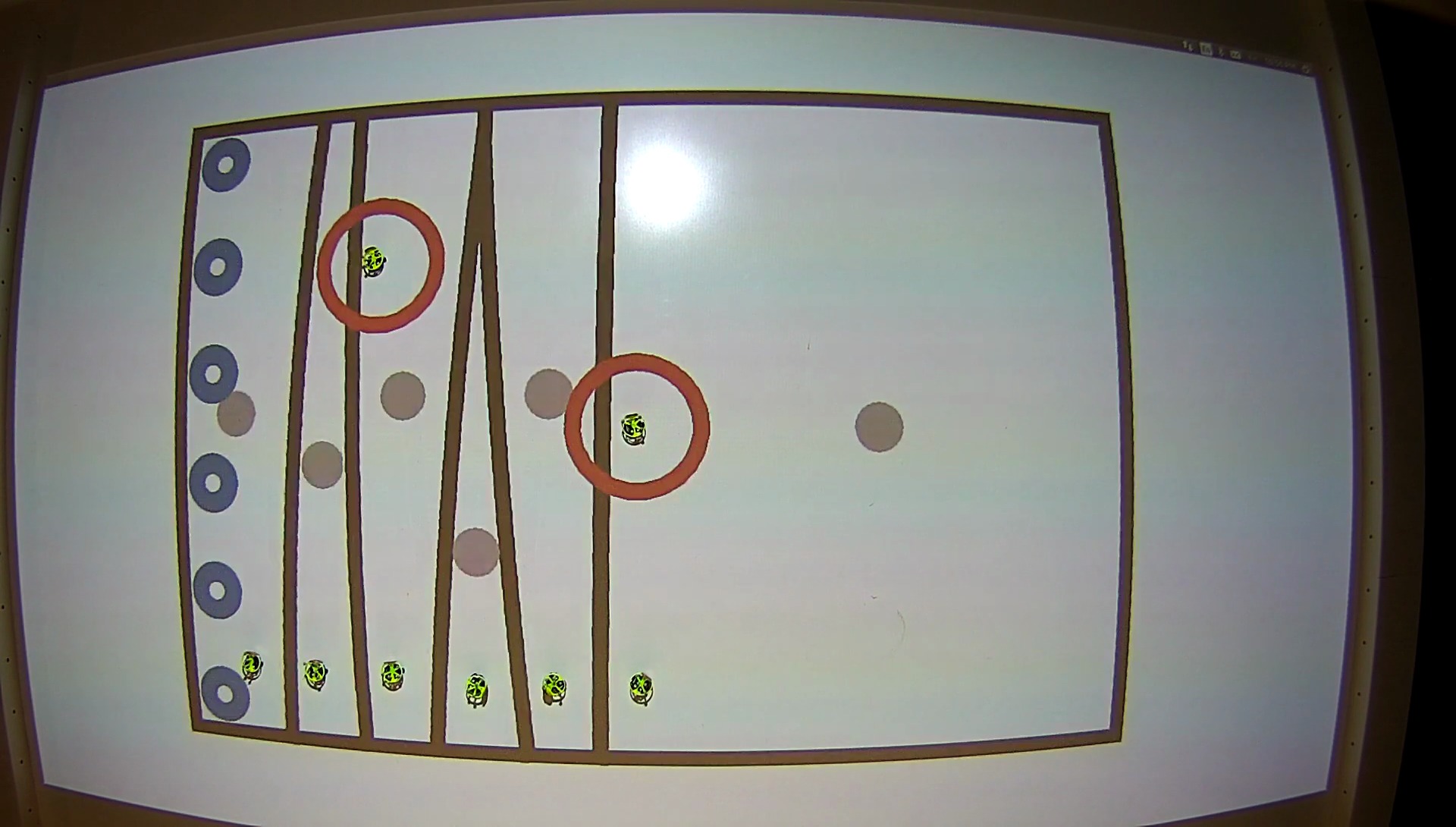}}\\
	\subfloat[][]{\label{subfig:b}\includegraphics[trim={7cm 2cm 14cm 4cm}, clip,width=.35\textwidth]{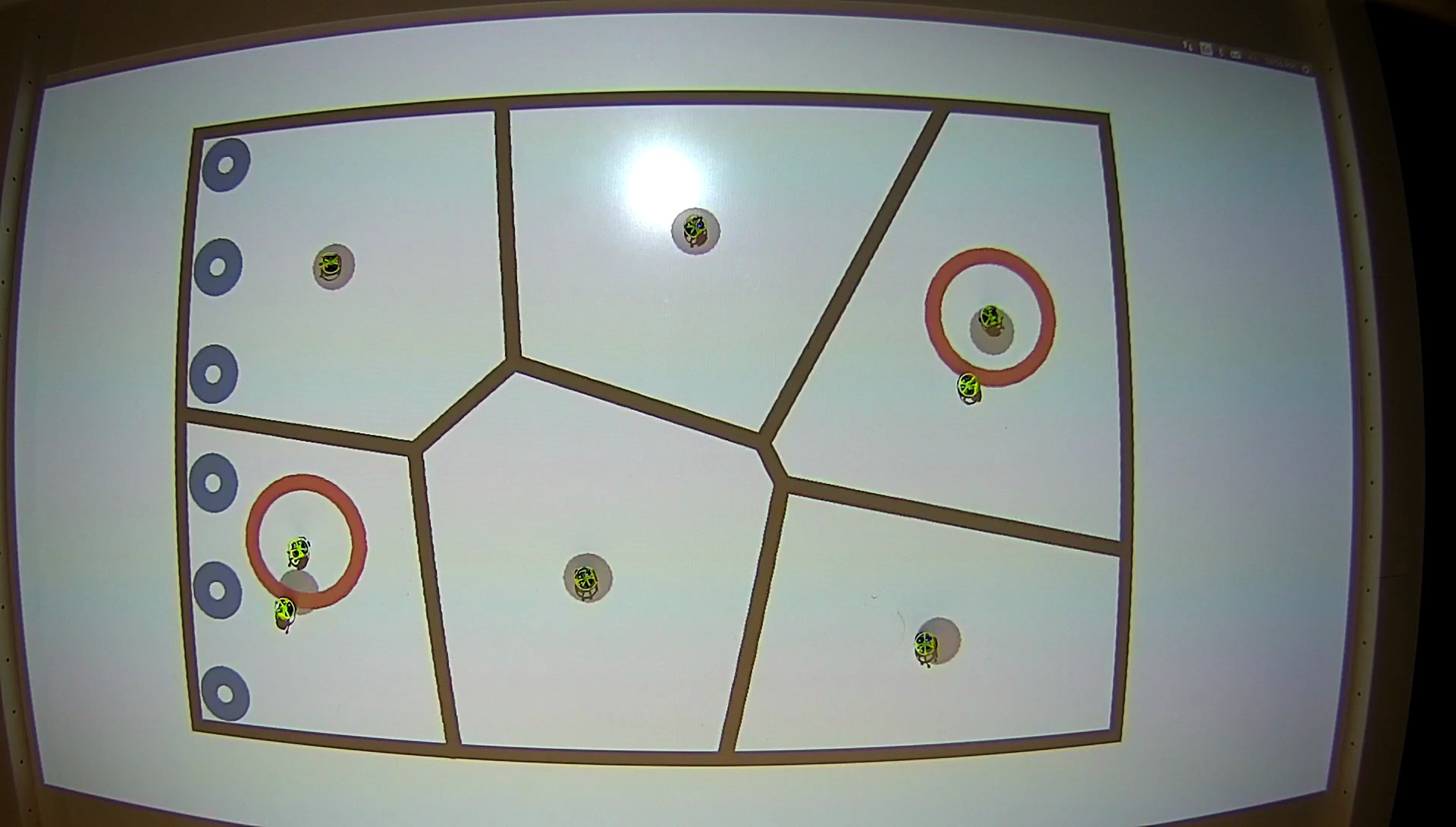}}\\
	\subfloat[][]{\label{subfig:c}\includegraphics[trim={7cm 2cm 14cm 4cm}, clip,width=.35\textwidth]{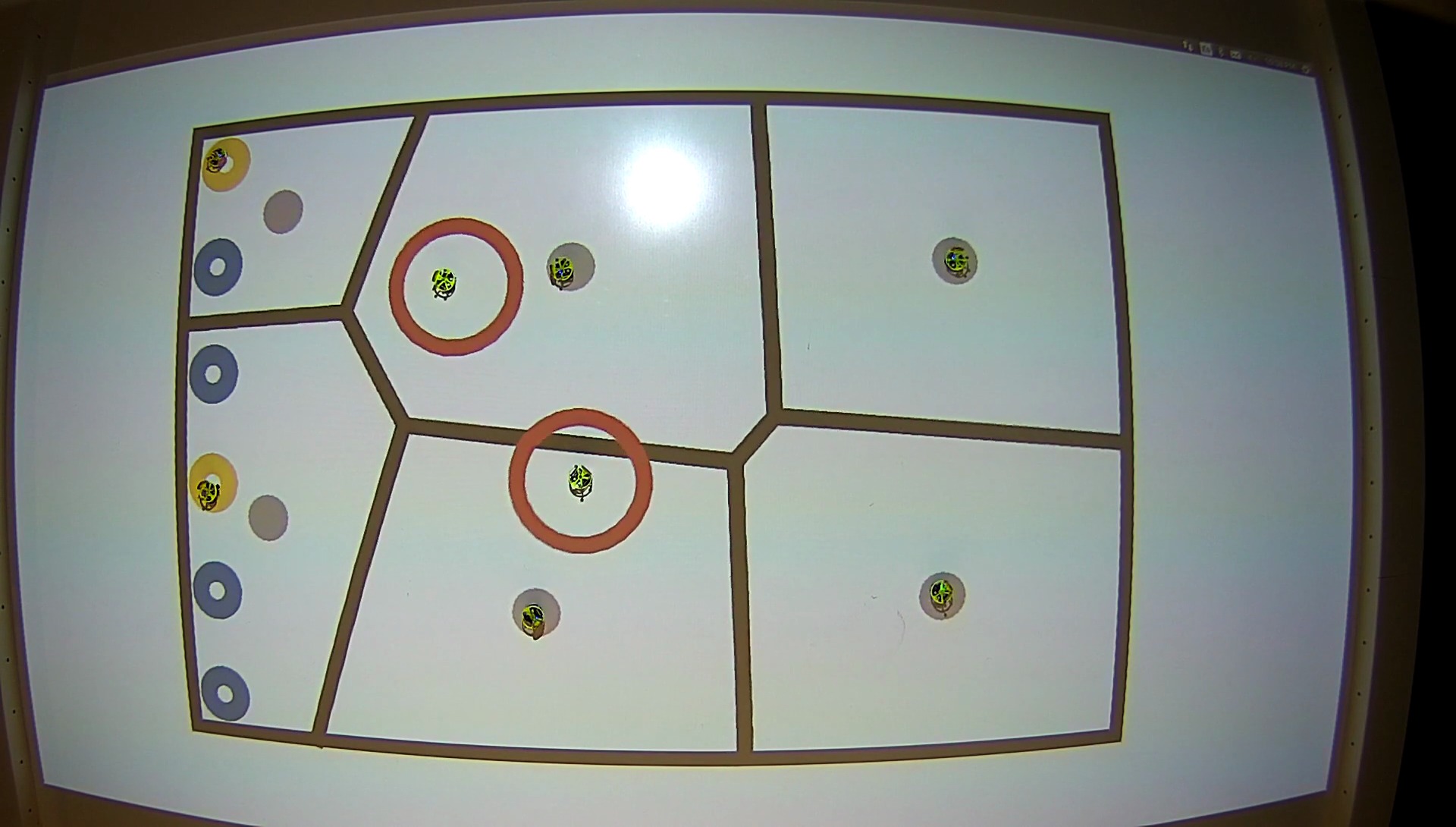}}\\
	\subfloat[][]{\label{subfig:d}\includegraphics[trim={7cm 2cm 14cm 4cm}, clip,width=.35\textwidth]{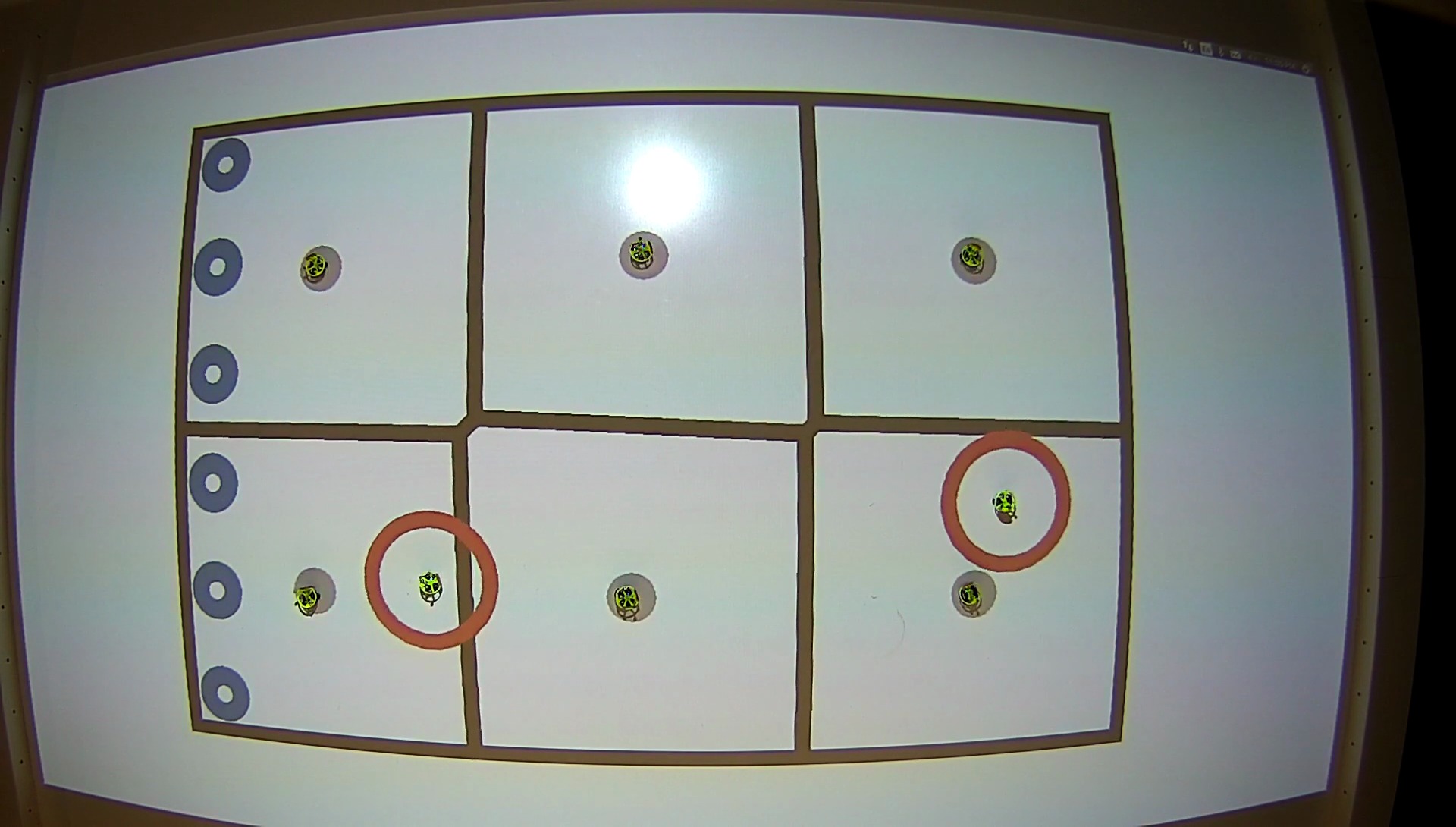}}
	\caption{A team of six robots is tasked with monitoring a rectangular domain on the Robotarium, by performing coverage control as in Fig.~\ref{fig:coverage}. This time, however, the robots are asked to perform this task over a time horizon which is much longer than their (simulated) battery life. Additionally, two more robots (circled in red) act as obstacles which have to be avoided by the remaining six robots. These execute \eqref{eq:persistence} to avoid the obstacles, go and recharge their batteries at the dedicated charging stations (blue circles on the left of the figures that turn yellow when the robots are charging), while always covering the given domain. A video of the experiments is available online \cite{youtubevideo}.}
	\label{fig:prioritizing}
\end{figure}

The results of the long-term environmental monitoring experiment are shown in Fig.~\ref{fig:prioritizing}. The Voronoi partition generated by the robots is projected down onto the testbed, as in Fig.~\ref{fig:coverage}. Arranged vertically along the left edge of the domain, there are six charging stations depicted as blue circles that turn yellow when the robots are charging. The two robots circle in red are moving in the environment acting solely as obstacles. The sequence of snapshots shows the robots starting to perform coverage (Fig.~\ref{subfig:a}), two robots avoiding a obstacles (top right and bottom left in Fig.~\ref{subfig:b}), and two robots recharging their batteries (Fig.~\ref{subfig:c}). In Fig.~\ref{subfig:d} the robots have reached a configuration corresponding to a local minimum of the locational cost \eqref{eq:loccost}. A video of the experiments can be found online \cite{youtubevideo}.

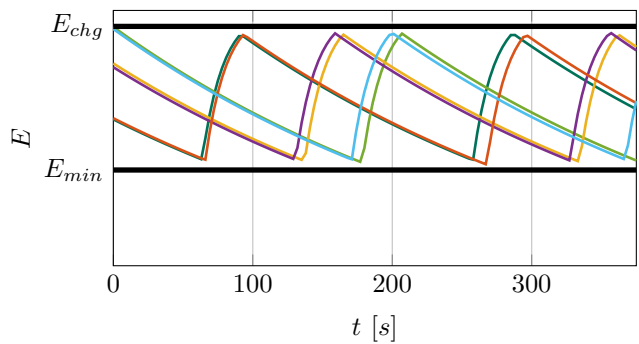
\begin{figure}
	\centering
	\begin{tikzpicture}
%		\Large % axis font size
		\begin{axis}
		[
		no marks, % remove marks
		xlabel={$t~[s]$}, % xlabel,
%		xtick={0,0.3,0.5,0.7854,1.0}, % xtick
%		xticklabels={0,30\%,$\frac{1}{2}$,$\frac{\pi}{4}$,$-e^{-i\pi}$}, % xticklabel
		ylabel={$E$}, % ylabel
		ytick={0,0.5,0.95}, % ytick
		yticklabels={0,$E_{min}$,$E_{chg}$}, % yticklabel
		%		xmin=-5, % xlim
		%		xmax=5, % xlim
		ymin=.2, % ylim
		ymax=1, % ylim
		enlarge x limits=-1, % x axis tight
		enlarge y limits=-1, % y axis tight
		%		axis equal image, % axis equal
		grid=both, % grid on
		%		xmajorgrids, % x grid on
		%		ymajorgrids, % y grid on
%		minor tick num=2, % grid minor
%		legend entries={one-column plot,test12,test34,test56}, % legend
%		legend style={nodes=right}, % legend style
%		legend pos= north west, % legend position
		width=0.48\textwidth, % image width
		height=0.28\textwidth % image height
		]
		\addplot [line width=1pt, color=myblue] table [x expr=\coordindex*3,y=x1]{data/energy.txt};
		\addplot [line width=1pt, color=mygreen] table [x expr=\coordindex*3,y=x2]{data/energy.txt};
		\addplot [line width=1pt, color=myyellow] table [x expr=\coordindex*3,y=x3]{data/energy.txt};
		\addplot [line width=1pt, color=mypurple] table [x expr=\coordindex*3,y=x4]{data/energy.txt};
		\addplot [line width=1pt, color=myorange] table [x expr=\coordindex*3,y=x5]{data/energy.txt};
		\addplot [line width=1pt, color=mycyan] table [x expr=\coordindex*3,y=x6]{data/energy.txt};
		\addplot [line width=2pt, color=black] table [x expr=\coordindex*3,y=emin]{data/energy_limits.txt};
		\addplot [line width=2pt, color=black] table [x expr=\coordindex*3,y=echarged]{data/energy_limits.txt};
		\end{axis}
	\end{tikzpicture}
	\caption{Simulated energy levels of the robots tasked with performing persistent coverage (Fig.~\ref{fig:prioritizing}). The residual energy is kept above a minimum desired value using \eqref{eq:energybarrier}. With the simulated energy dynamics, each robot experiences two charging cycles during the course of the experiment.}
	\label{fig:energylevels}
\end{figure}

Due to the limited amount of time that each experiment submitted to the Robotarium is allowed to last, we simulate the battery dynamics in such a way that the robots experience multiple charging cycles during the course of a single experiment. Fig.~\ref{fig:energylevels} shows the energy levels of the robots employed to perform coverage. The minimum desired energy level, $E_{min}$, and the value corresponding to fully charged battery, $E_{chg}$, are depicted as black thick lines. Enforcing the energy constraints using \eqref{eq:energybarrier} allows the robots to keep their energy level always above $E_{min}$.

We have shown how the constraint-driven control formulation can be used to build a minimum-energy optimization problem, whose constraints encode both the task that the robots are asked to perform and the survivability specifications, thus enabling the robust deployment of robots for long-term applications.

\section{CONCLUSIONS}
In this paper we presented a reformulation of optimization-based multi-robot tasks in terms of constrained optimization. Identifying a multi-robot task with a cost function that needs to be minimized, we leverage control barrier functions to synthesize decentralized optimization-based controllers that achieve the desired goal. The advantages of this approach include its flexibility of encoding several multi-robot tasks and the ease of combining them with different types of constraints. We showed how this flexibility can be used to enforce robot survivability and achieve long-term robot autonomy, where robustness and resilience are indispensable properties that robots have to possess. A systematic way of formulating the optimization problems for each agent of a robotic swarm is derived. Its effectiveness is demonstrated through a series of experiments using a team of ground mobile robots, culminating in a long-term environmental monitoring application.

%\addtolength{\textheight}{-12cm}   % This command serves to balance the column lengths
                                  % on the last page of the document manually. It shortens
                                  % the textheight of the last page by a suitable amount.
                                  % This command does not take effect until the next page
                                  % so it should come on the page before the last. Make
                                  % sure that you do not shorten the textheight too much.

%%%%%%%%%%%%%%%%%%%%%%%%%%%%%%%%%%%%%%%%%%%%%%%%%%%%%%%%%%%%%%%%%%%%%%%%%%%%%%%%

%%%%%%%%%%%%%%%%%%%%%%%%%%%%%%%%%%%%%%%%%%%%%%%%%%%%%%%%%%%%%%%%%%%%%%%%%%%%%%%%

%%%%%%%%%%%%%%%%%%%%%%%%%%%%%%%%%%%%%%%%%%%%%%%%%%%%%%%%%%%%%%%%%%%%%%%%%%%%%%%%
\section*{APPENDIX}

The formulation presented in this paper also allows an exact decentralized implementation of the coverage control with time-varying density functions. In \cite{lee2015multirobot}, the authors show that the control law
\begin{equation}\label{eq:utimevaringphi}
u = \left( I-\frac{\de G}{\de x} \right)\inv\left( (G(x,t)-x) + \frac{\de G}{\de t} \right)
\end{equation}
minimizes the locational cost
\begin{equation}\label{eq:loccostcortes}
\mc H(x,t) = \sum_{i=1}^{N} \int_{V_i} \|q-x_i\|^2 \phi(q,t) dq.
\end{equation}
$\phi : (q,t)\in D\times\R_+ \mapsto \phi(q,t)\in\R_+ $ is a time-varying density function, which specifies the importance of point $q$ at time $t$. The cost in \eqref{eq:loccostcortes} is equivalent to the one defined in \eqref{eq:loccost} when the centroids $G_i(x)$ are calculated weighting the points in the domain according to the value of the density function associated to them, as shown in \cite{cortes2004coverage}.

However, inverting the matrix $I-\frac{\de G}{\de x}$ in \eqref{eq:utimevaringphi} cannot be done in a decentralized fashion. For this reason, in \cite{lee2015multirobot}, the inverse is approximated by a truncated Neumann series as
\[
\left(I-\frac{\de G}{\de x}\right)\inv \approx I+\frac{\de G}{\de x},
\]
which, on the contrary, can be evaluated only based on information about neighboring robots.

With the formulation presented in this paper, instead, by implementing the optimization problem \eqref{eq:optuJ} in Proposition~\ref{prop:decentralizedtask}, each robot has to solve
\[
\begin{aligned}
\min_{u_i,\delta_i}~&\|u_i\|^2+|\delta_i|^2\\
\st~&-(x_i-G_i(x,t))\tr\left( I-\frac{\de G_i(x,t)}{\de x_i} \right) u_i\\
&\ge -\alpha(-J_i(x,t))-(x_i-G_i(x,t))\tr\frac{\de G_i(x,t)}{\de t}-\delta_i,
\end{aligned}
\]
which is both exact and decentralized.

\section*{ACKNOWLEDGMENT}

The authors would like to thank Professor Jonathan N. Pauli for helpful discussions about ecology.

%%%%%%%%%%%%%%%%%%%%%%%%%%%%%%%%%%%%%%%%%%%%%%%%%%%%%%%%%%%%%%%%%%%%%%%%%%%%%%%%

\bibliographystyle{IEEEtran}
\bibliography{bib/IEEEabrv,bib/acc2019tasksasconstraints}

\end{document}